\newtheorem{theorem}{Theorem}
\newtheorem{definition}{Definition}
\newtheorem{lemma}{Lemma}
\newtheorem{proposition}{Proposition}
\newtheorem{remark}{Remark}
\newtheorem{assumption}{Assumption}
\newcommand{\valpha}{\boldsymbol{\alpha}}
\newcommand{\vbeta}{\boldsymbol{\beta}}
\newcommand{\vtheta}{\boldsymbol{\theta}}
\newcommand{\vnu}{\boldsymbol{\nu}}
\newcommand{\vb}{\boldsymbol{b}}
\newcommand{\vf}{\boldsymbol{f}}
\newcommand{\vk}{\boldsymbol{k}}
\newcommand{\vm}{\boldsymbol{m}}
\newcommand{\vn}{\boldsymbol{n}}
\newcommand{\vw}{\boldsymbol{w}}
\newcommand{\vx}{\boldsymbol{x}}
\newcommand{\vy}{\boldsymbol{y}}
\newcommand{\vz}{\boldsymbol{z}}
\newcommand{\vA}{\boldsymbol{A}}
\newcommand{\vO}{\boldsymbol{O}}
\newcommand{\vW}{\boldsymbol{W}}
\newcommand{\vZ}{\boldsymbol{Z}}
\newcommand{\sgn}{\mathrm{sgn}}
\newcommand{\I}{\mathrm{i}}
\newcommand{\sK}{\mathbb{K}}
\newcommand{\sN}{\mathbb{N}}
\newcommand{\sR}{\mathbb{R}}
\newcommand{\sT}{\mathbb{T}}
\newcommand{\fB}{\mathcal{B}}
\newcommand{\fC}{\mathcal{C}}
\newcommand{\fD}{\mathcal{D}}
\newcommand{\fF}{\mathcal{F}}
\newcommand{\fG}{\mathcal{G}}
\newcommand{\fH}{\mathcal{H}}
\newcommand{\fJ}{\mathcal{J}}
\newcommand{\fL}{\mathcal{L}}
\newcommand{\fN}{\mathcal{N}}
\newcommand{\fO}{\vO}
\newcommand{\fP}{\mathcal{P}}
\newcommand{\fR}{\mathcal{R}}
\newcommand{\fS}{\mathcal{S}}
\newcommand{\fT}{\mathcal{T}}
\newcommand{\fX}{\mathcal{X}}
\newcommand{\fY}{\mathcal{Y}}
\newcommand{\fZ}{\mathcal{Z}}
\journal{}
\begin{document}

\begin{frontmatter}



\title{DeepONet for Solving Nonlinear Partial Differential Equations with Physics-Informed Training} 


\author[label1]{Yahong Yang} 
\ead{yyang3194@gatech.edu}

\affiliation[label1]{organization={School of Mathematics, Georgia Institute of Technology},
            addressline={686 Cherry Street},
            city={Atlanta},
            postcode={30332},
            state={Georgia},
            country={USA}}


\begin{abstract}
In this paper, we investigate the applications of operator learning, specifically DeepONet, for solving nonlinear partial differential equations (PDEs). Unlike conventional function learning methods that require training separate neural networks for each PDE, operator learning enables generalization across different PDEs without retraining. This study examines the performance of DeepONet in physics-informed training, focusing on two key aspects: (1) the approximation capabilities of deep branch and trunk networks, and (2) the generalization error in Sobolev norms. Our results show that complex branch networks provide substantial performance gains, while trunk networks are most effective when kept relatively simple. Furthermore, we derive a bound on the generalization error of DeepONet for solving nonlinear PDEs by analyzing the Rademacher complexity of its derivatives in terms of pseudo-dimension. This work bridges a critical theoretical gap by delivering rigorous error estimates. This paper fills a theoretical gap by providing {\color{black}error estimates} for a wide range of physics-informed machine learning models and applications.
\end{abstract}



\begin{keyword}
DeepONet \sep Physics-Informed Training \sep Nonlinear PDEs \sep pseudo-dimension



\end{keyword}

\end{frontmatter}



\section{Introduction}
\label{sec1}
Solving Partial Differential Equations (PDEs) using neural networks has been widely applied in mathematical and engineering fields, especially for high-dimensional domains where classical methods, such as finite elements \citep{brenner2008mathematical}, face challenges. Methods for solving PDEs can be broadly divided into two categories: function learning and operator learning. Function learning methods, such as Physics-Informed Neural Networks (PINNs) \citep{raissi2019physics}, the Deep Ritz Method \citep{weinan2018deep}, the Deep Galerkin Method \citep{sirignano2018dgm}, and approaches based on random features \citep{chen2022bridging, sun2024local, dong2023method}, use neural networks to directly approximate the solutions of PDEs by minimizing specifically designed loss functions. A significant limitation of function learning approaches is the need to train a separate neural network for each PDE, especially if it differs from the one on which the network was originally trained. On the other hand, operator learning methods, such as DeepONet \citep{lu2021deep} and the Fourier Neural Operator (FNO) \cite{li2020fourier}, focus on learning the operator that maps the PDE parameters to their corresponding solutions. These methods are more general and do not require retraining for different PDEs, provided that the underlying differential operator remains unchanged.

Therefore, applying operator learning to solve PDEs is more effective. In this paper, we will study the performance of operator learning in solving PDEs, with a focus on DeepONet \citep{lu2021learning}. In the DeepONet structure introduced by \cite{lu2021learning}, the architecture is expressed as follows:
\begin{equation}\label{classical}
    \sum_{k=1}^p \underbrace{\sum_{i=1}^q c_i^k \sigma\left(\sum_{j=1}^m \xi_{i j}^k s\left(\vx_j\right) + \theta_i^k\right)}_{\text{branch}} 
    \underbrace{\sigma\left(\vw_k \cdot \vy + \zeta_k\right)}_{\text{trunk}},\notag
\end{equation}
where the branch network processes the input function, and the trunk network processes the coordinates.
A key difference between this and shallow neural networks for function approximation is the replacement of the coefficient term with a branch network, which itself resembles a shallow neural network. This facilitates the generalization of the structure into a more flexible form, given by:
\begin{align}
    \fG(s; \vtheta) := \sum_{k=1}^p \underbrace{\fB(\fD(s); \vtheta_{1,k})}_{\text{branch}} \underbrace{\fT(\vy; \vtheta_{2,k})}_{\text{trunk}},
    \label{gen}
\end{align}
where both $\fB$ and $\fT$ are fully connected DNNs. This kind of the structure has also been mentioned in \cite{lu2022comprehensive,li2023phase}. Our approach can easily reduce to the shallow neural network case, recovering the classical DeepONet. In Eq.~\eqref{gen}, $\fD$ denotes a projection that reduces $s$ to a finite-dimensional vector, allowing for various reduction techniques in differential problems, such as the truncation of Fourier and Taylor series or the finite element method \citep{brenner2008mathematical}. In the classical DeepONet, this projection is based on the sample points.

Note that both the branch and trunk networks can easily be generalized into deep neural networks. As discussed in \cite{lu2021deep, yang2023nearly, yang2023nearlys,yang2025deep}, deep neural networks have been shown to outperform shallow neural networks in terms of approximation error for function approximation. This raises the following question:

\textbf{Q1:} Do the branch and trunk networks have sufficient approximation ability when they are deep neural networks? If so, is their performance in terms of approximation error better than that of simple neural networks when they are made deeper and wider?

{\color{black}The results in this paper show that the branch network benefits significantly from increased complexity, whereas the trunk network does not. The reason is that the trunk network serves to represent the basis of the output space, and its approximation ability is constrained by the number of features $p$ rather than by the complexity of each feature representation. In contrast, the branch network approximates the coefficients of these basis functions in an infinite-dimensional space. Approximating this part is inherently challenging, and thus requires a more expressive (deeper and wider) structure. When the branch and trunk networks are balanced, Theorem~\ref{approximation} shows that the complexity requirement of the trunk network only grows logarithmically compared with that of the branch network. Hence, in most cases, the branch network should be complex if we want to improve the approximation error, even though this may increase the training cost. By contrast, increasing the complexity of the trunk network does not improve the approximation error but substantially raises the training complexity.  }

Regarding deep structures, while a deep trunk network may in principle achieve a \textit{super-convergence rate}—a rate better than \( M^{-\tfrac{s}{d}} \), where \( M \) is the number of parameters, \( s \) the smoothness of the function class, and \( d \) the dimension—there are major challenges. First, most of the parameters in the trunk network depend on the input function \( v \). For each such parameter, a branch network must be used to approximate it, making the overall DeepONet structure more complicated than Eq.~\eqref{gen}. Second, this requires the branch network to approximate discontinuous functionals \citep{yarotsky2020phase}, which is highly challenging, since deep neural networks are typically designed to approximate continuous functions \citep{devore1989optimal}. The branch network, however, does not face these issues and can benefit from increased depth in terms of approximation error. This observation is consistent with the findings of \cite{lu2022comprehensive}, which show that the trunk network is sensitive and difficult to train. Therefore, the trunk network should generally be kept simple, with methods such as Proper Orthogonal Decomposition (POD) used to assist in learning.

The second question arises from applying operator learning to solve PDEs. In \cite{lanthaler2022error}, they consider the error analysis of DeepONet in the \( L^2 \)-norm. However, for solving PDEs such as 
\begin{equation}
\begin{cases}
\fL u = f & \text{in } \Omega, \\
u = 0 & \text{on } \partial \Omega,
\end{cases}
\label{PDE}
\end{equation}
where \( \Omega = [0,1]^d \) and \( \fL \) is a second-order differential operator, the loss function should incorporate information about derivatives, similar to the Physics-Informed Neural Network (PINN) loss \citep{raissi2019physics}. Although some papers have applied PINN losses to DeepONet, such as \cite{wang2021learning, lin2023operator, hao2024newton}, a theoretical gap remains in understanding the generalization error of DeepONet when measured in Sobolev norms.

\textbf{Q2:} What is the generalization error of DeepONet in physics-informed training?

In this paper, we address the critical question of generalization error estimation for solving nonlinear PDEs, particularly those that can be expressed in polynomial form involving derivatives of the unknown function and the function itself. Using the Rademacher complexity \cite{anthony1999neural}, we establish bounds on the generalization error by leveraging the concept of pseudo-dimension \cite{pollard1990empirical}.

While previous work such as \cite{gopalani2024towards} has analyzed the Rademacher complexity of DeepONet, they did not account for the derivatives of DeepONet. Additionally, their approach to bounding Rademacher complexity was based on the magnitude of the parameters rather than their size. This parameter-magnitude-based approach is effective when the magnitude is not excessively large. However, in deep neural networks designed to achieve optimal approximation rates, parameter magnitudes can become quite large, as discussed in \cite{siegel2022optimal}.
Our work introduces a novel approach to bounding the Rademacher complexity and generalization error, addressing these limitations. By incorporating the derivatives of DeepONet and focusing on the pseudo-dimension, we provide a more robust theoretical framework that improves upon existing methodologies and fills a significant gap in the analysis of operator learning for nonlinear PDEs.

\subsection{Contributions and Organization of the Paper}
The rest of the paper is organized as follows. In Section 2, we introduce the notations used throughout the paper and set up the operator and loss functions from the problem statement. In Section 3, we estimate the approximation rate. In the first subsection, we reduce operator learning to function learning, showing that the trunk network does not benefit from a deep structure. In the second subsection, we reduce functional approximation to function approximation and demonstrate that the encoding of the input space for the operator can still rely on traditional sampling methods, without the need to incorporate derivative information, even in physics-informed training. In the last subsection, we approximate the function and show that the branch network can benefit from a deep structure. In Section 4, we provide the generalization error analysis of DeepONet in physics-informed training. All omitted proofs from the main text are presented in the appendix.

\subsection{Related works}
\textbf{Function Learning:} Unlike operator learning, function learning focuses on learning a map from a finite-dimensional space to a finite-dimensional space. Many papers have studied the theory of function approximation in Sobolev spaces, such as \cite{mhaskar1996neural, shen2022optimal, siegel2022optimal, yang2023nearly, yang2023nearlys, yang2024near, opschoor2022exponential, yarotsky2017error, yarotsky2020phase}, among others. In \cite{mhaskar1996neural, yang2024near, yarotsky2020phase}, the approximation is considered under the assumption of continuous approximators, as mentioned in \cite{devore1989optimal}. However, for deep neural networks, the approximator is often discontinuous, as discussed in \cite{yarotsky2020phase}, which still achieves the optimal approximation rate with respect to the number of parameters. This is one of the benefits of deep structures, and whether this deep structure provides a significant advantage in approximation is a key point of discussion in this work.

\textbf{Functional Learning:} Functional learning can be viewed as a special case of operator learning, where the objective is to map a function to a constant. Several papers have provided theoretical analyses of using neural networks to approximate functionals. In \cite{chen1993approximations}, the universal approximation of neural networks for approximating continuous functionals is established. In \cite{song2023approximation}, the authors provide the approximation rate for functionals, while in \cite{yang2022approximation}, they address the approximation of functionals without the curse of dimensionality using Barron space methods. In \cite{yang2024sphericalanalysislearningnonlinear}, the approximation rate for functionals on spheres is analyzed.

\textbf{Operator Learning:} The operator learning aspect is closely related to this work. In \cite{lanthaler2022error,chen1995universal,gopalani2024towards,liu2024neural}, an error analysis of DeepONet is performed; however, their analysis is primarily focused on solving PDEs in the \( L^2 \)-norm and does not consider network design, i.e., which parts of the network should be deeper and which should be shallower. In \cite{lanthaler2023operator}, the focus is on addressing the curse of dimensionality and providing lower bounds on the number of parameters required to achieve the target rate. However, in their results, there is a parameter \( \gamma \) that obscures the structural information of the network, making the benefits of network architecture in operator learning unclear. 
Furthermore, for the generalization error analysis, they first assume that \( \mathcal{G}(f; \boldsymbol{\theta})(\boldsymbol{y}) \) is Lipschitz with respect to \( \boldsymbol{\theta} \). However, the Lipschitz constant can be extremely large for deep neural networks \cite{bartlett2019nearly,siegel2022optimal}, making this assumption highly restrictive. Moreover, in their final results, they bound the generalization error using the total number of parameters, which does not effectively distinguish the contributions of different network components, such as the branch net and the trunk net, nor does it explicitly reveal how the depth and width of the neural network affect its performance. This approach treats all factors as having the same influence on the generalization error, which is a rough approximation. 
In \cite{gopalani2024towards}, an analysis of the Rademacher complexity of DeepONet is provided. However, their study does not take into account the derivatives of DeepONet. Additionally, their approach to bounding Rademacher complexity relies on the magnitude of the parameters rather than their size. This parameter-magnitude-based approach is effective when the magnitudes are not excessively large. However, in deep neural networks designed to achieve optimal approximation rates, parameter magnitudes can become significantly large, as discussed in \cite{siegel2022optimal}.

In \cite{liu2022deep,liu2024neural}, they summarize the general theory of operator learning, but the existence of the benefits of deep neural networks is still not explicitly visible in their results. Other works, such as those focusing on Fourier Neural Operators (FNO) \cite{kovachki2021universal}, explore different types of operators, which differ from the focus of this paper.

\section{Preliminaries}
\subsection{Notations}
Let us summarize all basic notations used in the paper as follows:
		
		\textbf{1}. Matrices are denoted by bold uppercase letters. For example, $\vA\in\sR^{m\times n}$ is a real matrix of size $m\times n$ and $\vA^\top$ denotes the transpose of $\vA$.
		
		
		\textbf{2}. For a $d$-dimensional multi-index $\valpha=[\alpha_1,\alpha_2,\cdots\alpha_d]\in\sN^d$, we denote several related notations as follows: $(a)~ |\boldsymbol{\alpha}|=\left|\alpha_1\right|+\left|\alpha_2\right|+\cdots+\left|\alpha_d\right|$; $(b)~\boldsymbol{x}^\alpha=x_1^{\alpha_1} x_2^{\alpha_2} \cdots x_d^{\alpha_d},~ \boldsymbol{x}=\left[x_1, x_2, \cdots, x_d\right]^\top$; $ (c)~\boldsymbol{\alpha} !=\alpha_{1} ! \alpha_{2} ! \cdots \alpha_{d} !.$
		
		\textbf{3}. Let $B_{r,|\cdot|}(\vx)\subset\sR^d$ be the closed ball with a center $\vx\in\sR^d$ and a radius $r$ measured by the Euclidean distance. Similarly, $B_{r,\|\cdot\|_{\ell_\infty}}(\vx)\subset\sR^d$ be the closed ball with a center $\vx\in\sR^d$ and a radius $r$ measured by the $\ell_\infty$-norm.
		
		\textbf{4}. Assume $\vn\in\sN_+^n$, then $f(\vn)=\vO(g(\vn))$ means that there exists positive $C$ independent of $\vn,f,g$ such that $f(\vn)\le Cg(\vn)$ when all entries of $\vn$ go to $+\infty$.
		
		\textbf{5}. Define $\sigma_1(x):=\sigma(x)=\max\{0,x\}$ and $\sigma_2:=\sigma^2(x)$. We call the neural networks with activation function $\sigma_t$ with $t\le i$ as $\sigma_i$ neural networks ($\sigma_i$-NNs). With the abuse of notations, we define $\sigma_i:\sR^d\to\sR^d$ as $\sigma_i(\vx)=\left[\begin{array}{c}
			\sigma_i(x_1) \\
			\vdots \\
			 \sigma_i(x_d)
		\end{array}\right]$ for any $\vx=\left[x_1, \cdots, x_d\right]^T \in\sR^d$.
		
		\textbf{6}. Define $L,W\in\sN_+$, $W_0=d$ and $W_{L+1}=1$, $W_i\in\sN_+$ for $i=1,2,\ldots,L$, then a $\sigma_i$-NN $\phi$ with the width $W$ and depth $L$ can be described as follows:\[\boldsymbol{x}=\tilde{\boldsymbol{h}}_0 \stackrel{W_1, b_1}{\longrightarrow} \boldsymbol{h}_1 \stackrel{\sigma_i}{\longrightarrow} \tilde{\boldsymbol{h}}_1 \ldots  \tilde{\boldsymbol{h}}_L \stackrel{W_{L+1}, b_{L+1}}{\longrightarrow} \phi(\boldsymbol{x})=\boldsymbol{h}_{L+1},\] where $\vW_i\in\sR^{W_i\times W_{i-1}}$ and $\vb_i\in\sR^{W_i}$ are the weight matrix and the bias vector in the $i$-th linear transform in $\phi$, respectively, i.e., $\boldsymbol{h}_i:=\boldsymbol{W}_i \tilde{\boldsymbol{h}}_{i-1}+\boldsymbol{b}_i, ~\text { for } i=1, \ldots, L+1$ and $\tilde{\boldsymbol{h}}_i=\sigma_i\left(\boldsymbol{h}_i\right),\text{ for }i=1, \ldots, L.$ In this paper, an DNN with the width $W$ and depth $L$, means
		(a) The maximum width of this DNN for all hidden layers less than or equal to $W$.
		(b) The number of hidden layers of this DNN less than or equal to $L$.

  \textbf{7}.
			Denote $\Omega$ as $[0,1]^d$, $D$ as the weak derivative of a single variable function and $D^{\valpha}=D^{\alpha_1}_1D^{\alpha_2}_2\ldots D^{\alpha_d}_d$ as the partial derivative where $\valpha=[\alpha_{1},\alpha_{2},\ldots,\alpha_d]^T$ and $D_i$ is the derivative in the $i$-th variable. Let $n\in\sN$ and $1\le p\le \infty$. Then we define Sobolev spaces\[W^{n, p}(\Omega):=\left\{f \in L^p(\Omega): D^{\valpha} f \in L^p(\Omega) \text { with }|\boldsymbol{\alpha}| \leq n\right\}\] with a norm \[\|f\|_{W^{n, p}(\Omega)}:=\left(\sum_{0 \leq|\alpha| \leq n}\left\|D^{\valpha} f\right\|_{L^p(\Omega)}^p\right)^{1 / p},~p<\infty,\] and $\|f\|_{W^{n, \infty}(\Omega)}:=\max_{0 \leq|\alpha| \leq n}\left\|D^{\valpha} f\right\|_{L^\infty(\Omega)}$. For simplicity, $\|f\|_{W^{n, 2}(\Omega)}=\|f\|_{H^{n}(\Omega)}$.
			Furthermore, for $\vf=(f_1,f_2,\ldots,f_d)$, $\vf\in W^{1,\infty}(\Omega,\sR^d)$ if and only if $ f_i\in W^{1,\infty}(\Omega)$ for each $i=1,2,\ldots,d$ and \[\|\vf\|_{W^{1,\infty}(\Omega,\sR^d)}:=\max_{i=1,\ldots,d}\{\|f_i\|_{W^{1,\infty}(\Omega)}\}.\]

\subsection{Error component in physics-informed training}
In this paper, we consider applying DeepONet to solve partial differential equations (PDEs). Without loss of generality, we focus on the following PDEs \eqref{PDE}, though our results can be easily generalized to more complex cases. In this paper, we assume that \( \fL \) satisfies the following conditions:

\begin{assumption}\label{asspde}
    For any \( f_1,f_2 \in \fX \subset W^{n,\infty}(\Omega) \) and \( u_1,u_2 \in \fY \subset W^{n,\infty}(\Omega) \) for \( n \geq 2 \), which is the solution of Eq.~(\ref{PDE}) with source term \( f_1,f_2 \), the following conditions hold:
    
    (i) There exists a constant \( C \) such that
    \[
    \|u_1-u_2\|_{H^2(\Omega)} \leq C \|f_1-f_2\|_{L^2(\Omega)}.
    \]

    (ii) There exists a constant \( L \) such that
    \[
    \|u_1-u_2\|_{W^{n,\infty}(\Omega)} \leq L \|f_1-f_2\|_{W^{n,\infty}(\Omega)}.
    \]
    
    (iii) There exists a constant \( P \) such that
    \[
    \max\{\|v_1-v_2\|_{L^{2}(\partial\Omega)},\|\fL v_1-\fL v_2\|_{L^{2}(\Omega)} \}\leq  P\|v_1-v_2\|_{H^{2}(\Omega)},
    \]where $v\in H^{2}(\Omega)\cap C(\bar{\Omega})$.\end{assumption}

{\color{black}The above assumptions are satisfied in many practical settings.  
For example, \cite{grisvard2011elliptic} shows that Condition~(i) in 
Assumption~\ref{asspde} holds when $\mathcal{L}$ is a linear elliptic operator 
with smooth coefficients and the domain $\Omega$ is a polygon. Moreover, the same 
assumption holds for convex domains provided that the coefficients satisfy the 
Cordes condition~\cite{smears2013discontinuous}, and for $C^{1,1}$ domains provided 
that the coefficients belong to VMO~\cite{chiarenza1993w2p}. In all these cases, 
the constant $C$ depends only on $\Omega$ and $\mathcal{L}$.

Condition (ii) in Assumption~\ref{asspde} requires the target operator to be Lipschitz continuous in
\( W^{n,\infty}(\Omega) \).
If \( \Omega \) is smooth, \( \mathcal{L} \) is linear with smooth coefficients, and
\( f \in C^{n}(\Omega) \), classical elliptic regularity yields
\[
\|u\|_{W^{n+2,\infty}(\Omega)}
   \le \|u\|_{C^{n+2}(\Omega)}
   \le L\,\|f\|_{C^{n}(\Omega)}=L\,\|f\|_{W^{n,\infty}(\Omega)},
\]
see \citet{evans2022partial}.  
This inequality is stronger than the condition imposed here. The constant $L$ in this case also only depends on $\Omega$ and $\fL$.

Condition (iii) Assumption~\ref{asspde} has two components.  
The first follows from the trace theorem: if \( \Omega \) is bounded and
\( \partial\Omega \) is \( C^{1} \) (or even Lipschitz;
see \citet{adams2003sobolev}), the requirement is met.  
The second concerns \( \mathcal{L} \); it is automatically satisfied for linear
operators with bounded, smooth coefficients. The constant $P$ in this case also only depends on $\Omega$ and $\fL$.
}

 Based on above assumption, the equation above has a unique solution, establishing a mapping between the source term $f\in\fX$ and the solution $u\in\fY$, denoted as $\fG_*$. In \cite{lu2021learning}, training such an operator is treated as a black-box approach, i.e., a data-driven method. However, in many cases, there is insufficient data for training, or there may be no data at all. Thus, it becomes crucial to incorporate information from the PDEs into the training process, as demonstrated in \cite{wang2021learning,lin2023operator,hao2024newton}. This technique is referred to as physics-informed training.

In other words, we aim to use \( \fG(~\cdot~;\vtheta) \) (see Eq.~(\ref{gen})) to approximate the mapping between \( f \) and \( u \). The chosen loss function is:
\begin{align}
   L_D(\vtheta)&:= L_\text{in}(\vtheta) + \beta L_\text{bound}(\vtheta), \notag \\
   L_\text{in}(\vtheta)&:= \int_{\fX}\int_\Omega  |\fL \fG(f;\vtheta)(\vy) - f(\vy)|^2 \, \mathrm{d} \vy \, \mathrm{d} \mu_{\fX}, \notag \\
   L_\text{bound}(\vtheta)&:= \int_{\fX} \int_{\partial \Omega} | \fG(f;\vtheta)(\vy)|^2 \, \mathrm{d} \vy \, \mathrm{d} \mu_{\fX},\notag
\end{align}
where \( \mu_{\fX} \) is a measure on \( \fX \), and \( \fX \) is the domain of the operator, i.e., \( f \in \fX \). The constant \( \beta \) balances the boundary and interior terms. The error analysis of \( L_\text{bound}(\vtheta) \) is the classical \( L^2 \) estimation. Due to {\color{black}Condition (iii) in Assumption \ref{asspde}}, we obtain:
\begin{align}
    L_D(\vtheta) \le C \int_{\fX} \|\fG(f;\vtheta)(\vy) - \fG_*(f)(\vy)\|^2_{H^2(\Omega)} \, \mathrm{d} \mu_{\fX}.\notag
\end{align}
Furthermore, if we consider that \( \mu_{\fX} \) is a finite measure on \( \fX \) and, for simplicity, assume \( \int_{\fX} \mathrm{d} \mu_{\fX} = 1 \), we have:
\begin{align}
    L_D(\vtheta) \leq C \sup_{f \in \fX} \|\fG(f;\vtheta)(\vy) - \fG_*(f)(\vy)\|^2_{H^2(\Omega)}.\notag
\end{align}
This error is called the approximation error. The other part of the error comes from the sample, which we refer to as the generalization error. To define this error, we first introduce the following loss, which is the discrete version of \( L_D(\vtheta) \):
\begin{equation}
    L_S(\vtheta) := \frac{1}{M} \sum_{i=1}^M \left[\frac{1}{P}\sum_{j=1}^P |\fL \fG(f_i;\vtheta)(\vy_j) - f_i(\vy_j)|^2+\frac{\beta}{Q}\sum_{k=1}^Q | \fG(f_i;\vtheta)(\vz_k)|^2\right],\notag
\end{equation}
where \( \vy_j \) is an i.i.d. uniform sample in \( \Omega \), \( \vz_j \) is an i.i.d. uniform sample in \( \partial\Omega \) and \( f_i \) is an i.i.d. sample based on \( \mu_{\fX} \).

Then, the generalization error is given by:
\begin{align}
    \mathbb{E} L_D(\vtheta_S) \leq L_D(\vtheta_D) + \mathbb{E} \left[ L_D(\vtheta_S) - L_S(\vtheta_S) \right], \label{total}
\end{align}
where \( \vtheta_D = \arg\min L_D(\vtheta) \) and \( \vtheta_S = \arg\min L_S(\vtheta) \). The expectation symbol is due to the randomness in sampling \( \vy \) and \( f \). The generalization error is represented by the last term in Eq.~(\ref{total}).

\section{Approximation Error}

In this section, we estimate the approximation error, i.e., we aim to establish the appropriate DeepONet structure and bound \( \inf_{\vtheta} L_D(\vtheta) \). The proof sketch is divided into three steps: the first step reduces the operator learning problem to a functional learning problem, the second step reduces functional learning to function learning, and the final step approximates the function. The diagram outlining the proof is shown in Figure~\ref{sketch app}.
\begin{figure}[h]
\centering
\vspace{0.1in}
\includegraphics[scale=0.57]{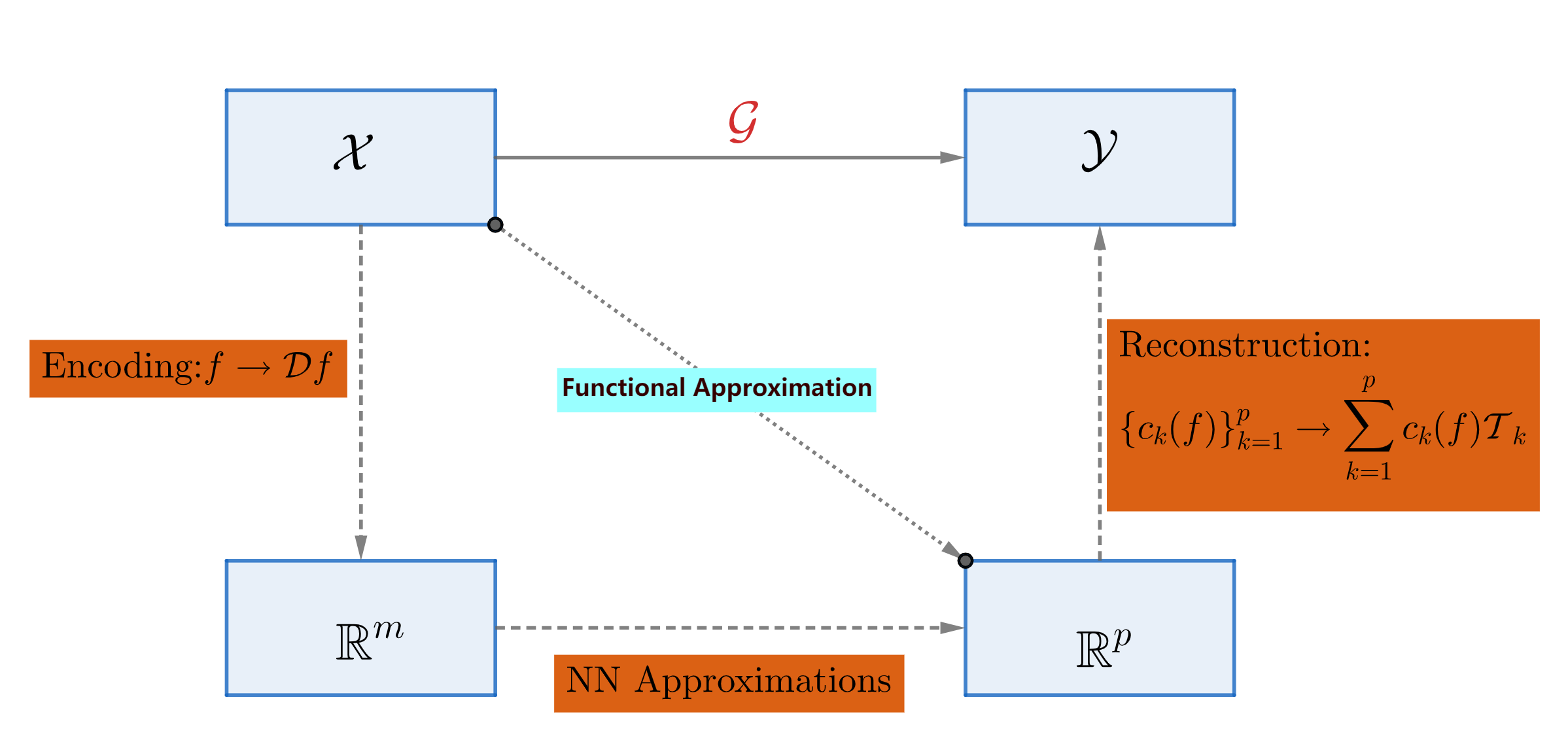}
\vspace{0.1in}
\caption{Sketch of the proof for the approximation error.}
\label{sketch app}
\end{figure}

\begin{theorem}\label{approximation}
    For any \( p, m \in \mathbb{N}^+ \) and sufficiently large \( q \), \( \lambda \in [1, 2] \), and \( \fG_*: \fX \subset H^{s}(\sT^d) \to H^{s}(\sT^d) \) with \( s > n + \frac{d}{2} \), where \( \max\{\|f\|_{W^{n,\infty}(\Omega)},\|f\|_{H^{s}(\Omega)}\} \le M \) for any \( f \in \fX \) and \( M > 0 \), satisfying Assumption \ref{asspde}, there exist \( \sigma_2 \)-NNs \( \fT(\vy; \vtheta_{2,k}) \) with depth \( 3 + \log_2 d - 1 + \log_2 n - 1 \), width \( 4n - 4 + 6d \), and a map \( \fD \) from \( \fX \to [-M, M]^m \), as well as $\sigma_1$-NNs \( \fB(\fD f; \vtheta_k) \) with \( C_1 9^mq \) parameters such that:
\begin{align}
    \sup_{f \in \fX} \left\| \fG_*(f) - \sum_{k=1}^{p} \fB(\fD f; \vtheta_k) \fT(\vy; \vtheta_{2,k}) \right\|_{H^{2}(\Omega)}
   \le C \left( p^{-\frac{n-2}{d}} + p^{\frac{2}{d}} m^{-\frac{s-s'}{d}}  +  m^{\frac{s'}{d}}p^{\frac{2}{d}}q^{-\frac{\lambda}{m}}\right),\notag
\end{align}
where \( s' \in \left(n + \frac{d}{2}, s\right) \), and \( C, C_1 \) are independent of \( m,p \) and \( q \). Furthermore, for \( \lambda = 1 \), \( \fB(\vz; \vtheta_k) \) is a shallow neural network that can achieve this approximation rate. As \( \lambda \) approaches 2, the ratio between the width and depth of \( \fB \) decreases, implying that the network structure becomes deeper.
\end{theorem}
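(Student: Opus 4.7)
The plan is to follow the three-stage reduction sketched in Fig.~\ref{sketch app}---operator $\to$ functional $\to$ function---with the three error terms produced one per stage and controlled respectively by $p$, $m$, and $q$.

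\textbf{Step 1 (operator to functional).} I would fix a basis $\{\phi_k\}_{k=1}^p$ on $\Omega$ consisting of tensorized polynomials of univariate degree $\le n-1$ on $\sim p$ subcubes of $[0,1]^d$, and write
\begin{equation}
\fG_*(f)(\vy) = \sum_{k=1}^p c_k(f)\,\phi_k(\vy) + R_p(f)(\vy).
\end{equation}
Assumption~\ref{asspde}(ii), combined with the Sobolev embedding $H^s\hookrightarrow W^{n,\infty}$ for $s>n+d/2$, gives $\|\fG_*(f)\|_{W^{n,\infty}(\Omega)} \le LM$ uniformly in $f\in\fX$, so a piecewise Bramble--Hilbert estimate yields $\|R_p(f)\|_{H^2(\Omega)} \le Cp^{-(n-2)/d}$ (two derivatives are consumed passing from $W^{n,\infty}$ to $H^2$). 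The trunk $\fT(\vy;\vtheta_{2,k})$ then has only to reproduce $\phi_k$ exactly. Since $xy=\tfrac{1}{2}((x+y)^2-x^2-y^2)$ is realizable as a constant-depth $\sigma_2$-combination, a balanced binary multiplication tree in $d$ variables with univariate degree $\le n-1$ uses precisely the stated depth $3+\log_2 d-1+\log_2 n-1$ and width $4n-4+6d$.

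\textbf{Step 2 (functional to function).} I would take $\fD:\fX \to [-M,M]^m$ to be the first-$m$ Fourier coefficients on $\sT^d$ and let $\tilde f_\vz$ denote the trigonometric reconstruction from $\vz\in[-M,M]^m$, so that $\|f - \tilde f_{\fD f}\|_{H^{s'}(\sT^d)} \le Cm^{-(s-s')/d}\|f\|_{H^s}$ for any $s'\in(n+d/2,s)$. Define $C_k(\vz):=c_k(\tilde f_\vz)$. Replacing $f$ by $\tilde f_{\fD f}$ in the Step~1 expansion produces an $H^2(\Omega)$ error of size $\|\phi_k\|_{H^2}\,|c_k(f)-c_k(\tilde f_{\fD f})|$ per term; Assumption~\ref{asspde}(ii) gives a Lipschitz estimate on $c_k$ in $H^{s'}$, $\|\phi_k\|_{H^2} \le Cp^{2/d}$ comes from two derivatives of an order-$p^{1/d}$ polynomial, and summing over $k$ yields the second term $p^{2/d}m^{-(s-s')/d}$. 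The same bookkeeping also gives $\|C_k\|_{W^{s',\infty}([-M,M]^m)} \le Cm^{s'/d}$, since differentiating along one of the $m$ Fourier modes costs a frequency factor of size $\sim m^{1/d}$ per derivative.

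\textbf{Step 3 (function approximation).} Finally I would approximate each $C_k$ by a $\sigma_1$-NN $\fB(\cdot;\vtheta_k)$. Standard results such as \cite{yang2023nearly, yarotsky2020phase} provide a shallow-net rate $q^{-1/m}$ on Lipschitz targets over $[-M,M]^m$ (the $\lambda=1$ endpoint) and a deep-net rate $q^{-2/m}$ on smoother targets; interpolating gives $q^{-\lambda/m}$ for $\lambda\in[1,2]$, with the depth-to-width ratio of $\fB$ growing as $\lambda\to 2$, while the $C_1 9^m q$ parameter budget reflects the unavoidable $9^m$ constant from covering $[-M,M]^m$ by a mesh. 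Multiplying this rate by $\|\phi_k\|_{H^2}\le Cp^{2/d}$ and the smoothness constant $m^{s'/d}$ from Step~2, then summing over $k$, produces the third term $m^{s'/d}p^{2/d}q^{-\lambda/m}$; a triangle inequality combining Steps~1--3 closes the argument.

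\textbf{Main obstacle.} Step~2 is the delicate piece: I have to construct $C_k$ on $[-M,M]^m$ so that its $W^{s',\infty}$-norm, its propagation constant into $H^2(\Omega)$, and its pointwise bound are all simultaneously controllable as $p,m$ vary, ensuring that the $p^{2/d}$ factor comes purely from $\|\phi_k\|_{H^2}$ and the $m^{s'/d}$ factor purely from the high-frequency Fourier modes, without the two compounding. A secondary subtlety is that the conclusion is a supremum over $\fX$ rather than an $L^2(\mu_\fX)$-average, which forces uniform-in-$f$ control of $c_k(f)$ at every stage, including choosing the polynomial basis $\{\phi_k\}$ so that the expansion functionals remain uniformly bounded over $\fX$.
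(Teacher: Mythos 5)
Your Steps 1 and 3 follow the paper's shape, but there are two substantive deviations, one of which is a genuine gap in the mechanism of Step 3.

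\textbf{Encoder $\fD$.} You take $\fD$ to be the first $m$ Fourier coefficients of $f$. The paper instead takes $\fD(f) = (f(\vx_\nu))_{\nu\in\{0,\ldots,2N\}^d}$ to be \emph{pointwise samples on a uniform grid}, and then recovers the truncated Fourier series through the pseudo-spectral projection $\fP$ (Lemma~\ref{pc}, from \cite{canuto1982approximation}). Both encoders give the same $N^{s'-s}$ error in $W^{n,\infty}$, but the choice is not cosmetic: one of the paper's stated conclusions is precisely that \emph{classical DeepONet sampling}, with no derivative or spectral information in the encoder, already suffices for Sobolev training. By encoding via Fourier coefficients you sidestep the point the theorem is designed to make, and also lose the clean reason that $\fD$ lands in $[-M,M]^m$ with $M=\|f\|_{L^\infty}$ (immediate for point values, only indirect for Fourier coefficients).

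\textbf{Step 3: regularity of $C_k$ and the role of depth.} You assert $\|C_k\|_{W^{s',\infty}([-M,M]^m)}\le Cm^{s'/d}$ and then say shallow nets give $q^{-1/m}$ on Lipschitz targets while deep nets give $q^{-2/m}$ on \emph{smoother} targets, interpolating to $q^{-\lambda/m}$. This is not how the paper's argument (or the cited approximation results) work, and the proposed mechanism would not close the proof. Lemma~\ref{Lip} establishes only that $\vz\mapsto c_k(\fG_*\circ\fP(\vz))\in W^{1,\infty}([-M,M]^m)$, i.e.\ Lipschitz, with Lipschitz constant $\vO(m^{s'/d})$; the exponent $s'$ lives in the \emph{constant}, not in the Sobolev order of $C_k$. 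No higher regularity of $C_k$ in $\vz$ is available under Assumption~\ref{asspde} (which gives Lipschitz control of $\fG_*$, nothing more), and indeed if $C_k$ were genuinely $W^{s',\infty}$, even shallow nets would already give rates far better than $q^{-1/m}$ and the whole dichotomy would dissolve. The improvement $q^{-\lambda/m}$ for $\lambda\to 2$ in the paper comes entirely from the \emph{depth} of $\fB$ acting on a merely-Lipschitz target (Lemma~\ref{deep}, the result of \cite{shen2022optimal}, giving $\vO((W^2L^2\log^3 W)^{-1/m})$ for $W^{1,\infty}$ functions): deeper networks realize discontinuous parameter-to-function maps and thereby beat the $q^{-1/m}$ rate of continuous approximators (Lemma~\ref{shallow}, \cite{mhaskar1996neural}). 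This is exactly the point emphasized in the paper's discussion around \cite{yarotsky2020phase,yang2024near}. So you need to replace your smoothness-based heuristic by: prove Lipschitz continuity of $C_k$ with the $m^{s'/d}$ constant, then invoke the shallow result for $\lambda=1$ and the deep (Shen et al.) result for $\lambda\in(1,2]$.

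Minor points: the $p^{2/d}$ factor in the second and third error terms actually arises from $\|s_\vm\|_{H^2}$, where $s_\vm$ is the partition-of-unity bump supported on a cube of side $\sim p^{-1/d}$, combined with the $p$-fold sum over $k$ (the paper's bound $\|\fT(\cdot;\vtheta_{2,k})\|_{H^2}\le Cp^{-1}p^{2/d}$), not from differentiating an order-$p^{1/d}$ polynomial; and the $9^m$ in the parameter count comes from the width $C_1 3^m W$ of the Shen et al.\ construction (so $9^m$ after squaring the width), not from a uniform mesh on $[-M,M]^m$.
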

\begin{remark}
    The term \( \lambda \) incorporates the benefit of the deep structure of the neural network. This deep structure arises from the branch network rather than the trunk network. In the approximation analysis, the deep structure of the trunk network does not yield benefits; instead, it makes the branch network more difficult to train. We will provide more details on this in the following subsections.

    {\color{black}Following \citet{liu2024neural}, we balance the terms in Theorem~\ref{approximation} as follows: set
\( q = \exp\bigl(\tfrac{s}{d\lambda}\, m\ln m\bigr) \).
With this choice, the last two terms are
\(\vO\bigl[p^{2/d}\,(\tfrac{\ln\!\ln q}{\lambda\ln q})^{(s-s')/d}\bigr]\).
Taking
\( p = \vO\bigl[(\tfrac{\ln\!\ln q}{\lambda\ln q})^{-(s-s')/n}\bigr] \)
yields the overall error bound
\(\vO\bigl[(\tfrac{\ln\!\ln q}{\lambda\ln q})^{\frac{s-s'}{d}(1-\frac{2}{n})}\bigr] \).
Thus, as \(\lambda\) increases, the total error decreases and the balanced values of \(p\) and \(m\) become smaller.
}
\end{remark}
We applied $\sigma_2$ because the trunk network requires a smoother approximation in $W^{2,\infty}$, whereas the branch network only needs an approximation in $W^{1,\infty}$. Thus, $\sigma_1$ is sufficient for the branch network. However, $\sigma_2$ can still be used for the branch network approximation without any significant difference. This is because the complexity of the two networks with respect to depth is the same, and the approximation abilities of $\sigma_1$ and $\sigma_2$ are equivalent when the depth is of the same order, as shown in \cite{bartlett2019nearly}.

\subsection{Operator learning to functional learning in Sobolev spaces}

\begin{proposition}
\label{operator}
For any \( p \in \mathbb{N} \) and \( v \in \fY \subset W^{n,\infty}(\Omega) \), there exist \( p \) \( \sigma_2 \)-NNs \( \{\fT(\vy;\vtheta_{2,k})\}_{k=1}^p \) and bounded linear functionals \( c_k: W^{n-1,\infty}(\Omega) \to \mathbb{R} \) such that
\begin{equation}
	\left\|v - \sum_{k=1}^{p} c_{k}(v) \fT(\vy; \vtheta_{2,k})\right\|_{H^{2}(\Omega)} \leq C p^{-\frac{n-2}{d}} \|v\|_{H^{n}(\Omega)},\notag
\end{equation}
where \( \fT(\vy; \vtheta_{2,k}) \) is a \( \sigma_2 \)-NN with depth \( 3 + \log_2 d - 1 + \log_2 n - 1 \) and width \( 4n - 4 + 6d \), and \( C \) is a constant$\footnote{For simplicity of notation, in the following statements, $C$ may represent different values from line to line.}$ independent of \( p \) and \( v \). Furthermore, $\sum_{k=1}^p\|\fT(\vy; \vtheta_{2,k})\|^2_{H^{2}(\Omega)}=\fO(p^{\frac{4}{d}})$.
\end{proposition}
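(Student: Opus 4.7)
My plan is to build an explicit, linear-in-$v$ approximant of partition-of-unity / local-Taylor type, where the basis pieces are small enough to be implemented exactly by $\sigma_2$-NNs of the stated size.

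\textbf{Step 1: Geometry and candidate approximant.} I would cover $\Omega=[0,1]^d$ by $N^d$ congruent closed subcubes $\{Q_j\}$ of side length $h \asymp 1/N$, fix $N$ so that $p \asymp N^d\binom{n-1+d}{d}$, and choose a smooth partition of unity $\{\phi_j\}$ subordinate to a slight enlargement of this cover, each $\phi_j$ being a tensor product of one-dimensional piecewise-quadratic B-splines (so $\phi_j\in W^{2,\infty}$ with $\|D^{\valpha}\phi_j\|_\infty \lesssim h^{-|\valpha|}$, and $\phi_j$ is exactly representable by $\sigma_2$-units). On each $Q_j$, let $T_j^{n-1}v$ be an averaged Taylor polynomial of degree $n-1$ centred at the midpoint $\vy_j$, whose $\valpha$-coefficient is $\ell_{j,\valpha}(v)/\valpha!$ with $\ell_{j,\valpha}$ a bounded linear functional on $W^{n-1,\infty}(\Omega)$ (for $|\valpha|\le n-2$ one may take $\ell_{j,\valpha}(v) = D^{\valpha}v(\vy_j)$; for $|\valpha|=n-1$ one averages over a ball inside $Q_j$ and integrates by parts so that only derivatives of order $\le n-1$ of $v$ are exposed). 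Relabelling the pairs $(j,\valpha)$ by $k=1,\dots,p$, set
\begin{equation*}
c_k(v) := \tfrac{1}{\valpha!}\,\ell_{j,\valpha}(v), \qquad \fT(\vy;\vtheta_{2,k}) := \phi_j(\vy)(\vy-\vy_j)^{\valpha}.
\end{equation*}

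\textbf{Step 2: Rate in $H^2$.} The error decomposes as $v-\sum_k c_k(v)\fT(\vy;\vtheta_{2,k}) = \sum_j \phi_j(v-T_j^{n-1}v)$. On each $Q_j$, the Bramble--Hilbert lemma for the averaged Taylor polynomial yields $\|v-T_j^{n-1}v\|_{W^{m,\infty}(Q_j)} \lesssim h^{n-m}\|v\|_{W^{n,\infty}(Q_j)}$ for $m\le n$. Applying the product rule to $\phi_j(v-T_j^{n-1}v)$ and combining $|D^m\phi_j|\lesssim h^{-m}$ with the $h^{n-m}$ decay of the Taylor remainder at the $m$-th derivative level, every contribution to $\|\phi_j(v-T_j^{n-1}v)\|_{H^2(Q_j)}$ scales like $h^{n-2}|Q_j|^{1/2}$. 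Summing over the $N^d$ patches using the finite-overlap property of the cover gives
\begin{equation*}
\left\|v-\sum_{k=1}^p c_k(v)\fT(\vy;\vtheta_{2,k})\right\|_{H^2(\Omega)} \lesssim h^{n-2}\|v\|_{W^{n,\infty}(\Omega)} \lesssim p^{-(n-2)/d}\|v\|_{H^{n}(\Omega)},
\end{equation*}
using $W^{n,\infty}(\Omega)\hookrightarrow H^n(\Omega)$ on the bounded domain $\Omega$.

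\textbf{Step 3: Realising each $\fT(\vy;\vtheta_{2,k})$ as a $\sigma_2$-NN.} Using $\sigma_2(x)+\sigma_2(-x)=x^2$ and $xy=\tfrac14[(x+y)^2-(x-y)^2]$, any shifted monomial $(\vy-\vy_j)^{\valpha}$ of total degree at most $n-1$ is computed exactly by a balanced binary multiplication tree of depth $\lceil\log_2 n\rceil$, the widest layer using $4(n-1)$ units. The factor $\phi_j$, a tensor product of $d$ one-dimensional piecewise quadratics, is computed by a tree of depth $\lceil\log_2 d\rceil$ and width $O(d)$. Running the two subnetworks in parallel and merging by one final multiplication produces a $\sigma_2$-NN of depth $3+\log_2 d -1+\log_2 n-1$ and width $4n-4+6d$, matching the statement exactly.

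\textbf{Main obstacle.} The delicate point is the simultaneous closing of two budgets. Analytically, every derivative on $\phi_j$ costs an $h^{-1}$, so the $H^2$ bound only closes because the averaged Taylor remainder contributes the compensating $h^{n-m}$; this forces the polynomial degree to be exactly $n-1$ and the partition of unity to lie in $W^{2,\infty}$. On the network side, the width has to remain \emph{linear} in $n+d$ rather than multiplicative, which forces the monomial tree and the partition-of-unity tree to be executed in parallel and merged only at the very end, so that widths add rather than multiply. Finally, keeping each $c_k$ bounded on $W^{n-1,\infty}(\Omega)$ rather than on $W^{n,\infty}(\Omega)$ is what necessitates the averaging / integration-by-parts step for the top-order coefficients; this is the main technical subtlety beyond standard Bramble--Hilbert.
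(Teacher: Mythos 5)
Your construction follows essentially the same route as the paper: overlapping cubic patches, a piecewise-quadratic partition of unity, an averaged Taylor polynomial of degree $n-1$ on each patch so that the coefficients are bounded on $W^{n-1,\infty}$, a Leibniz-rule cancellation of the $h^{-m}$ cost of differentiating the bump against the $h^{n-m}$ Bramble--Hilbert remainder, and a parallel binary multiplication tree for the $\sigma_2$-realisation of each $\phi_j(\vy)(\vy-\vy_j)^{\valpha}$. All of this matches the paper's proof and lands on the same width/depth bookkeeping.

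There is, however, one genuine gap in Step~2. You run Bramble--Hilbert in the $L^\infty$-based scale, so every local contribution comes out as $h^{n-2}|Q_j|^{1/2}\|v\|_{W^{n,\infty}(Q_j)}$, and after summing you obtain a bound in terms of $\|v\|_{W^{n,\infty}(\Omega)}$. You then pass to $\|v\|_{H^n(\Omega)}$ ``using $W^{n,\infty}(\Omega)\hookrightarrow H^n(\Omega)$''. That embedding gives $\|v\|_{H^n}\lesssim\|v\|_{W^{n,\infty}}$, which is the opposite of what you need; there is no bound $\|v\|_{W^{n,\infty}}\lesssim\|v\|_{H^n}$ on a $d$-dimensional domain. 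The fix is to stay in the $L^2$-based scale from the start: apply Bramble--Hilbert as $\|v-T_j^{n-1}v\|_{H^k(Q_j)}\lesssim h^{n-k}|v|_{H^n(Q_j)}$ for $k=0,1,2$, use the product rule as before, and then the squared local contributions sum directly to $h^{2(n-2)}\|v\|_{H^n(\Omega)}^2$ via finite overlap, with no Sobolev embedding at all. With that one change the argument closes and reproduces the paper's statement.
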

\begin{remark}
Before we present the sketch of the proof for Proposition \ref{operator}, there are some remarks we would like to mention. First, the number of parameters in each \( \fT(\vy; \vtheta_{2,k}) \) is independent of \( p \) and \( f \). This is a generalization of the trunk network in DeepONet. It can easily reduce to the classical DeepONet case by applying a shallow neural network to approximate the trunk, a well-known result that we omit here. 

Second, the target function \( v \) can be considered in \( W^{n,p}(\Omega) \) for a more general case. The method of proof remains the same; the only difference is that we need to assume that for any open set \( \Omega_* \subset \Omega \), the following inequality holds:
\[
\|v\|_{W^{n,p}(\Omega_*)}|\Omega_*|^{-\frac{1}{p}} \leq C\|v\|_{W^{n,p}(\Omega)},
\]
for any \( v \in \fY \). It is straightforward to verify that this condition holds for \( p = \infty \).
\end{remark}

The sketch of the proof of Proposition \ref{operator} is as follows. First, we divide \( \Omega \) into \( p \) parts and apply the Bramble--Hilbert Lemma \citep[Lemma 4.3.8]{brenner2008mathematical} to locally approximate \( v \) on each part using polynomials. The coefficients of these polynomials can be regarded as continuous linear functionals in \( W^{n,\infty}(\Omega) \). Next, we use a partition of unity to combine the local polynomial approximations. Finally, we represent both the polynomials and the partition of unity using neural networks.

Based on Proposition \ref{operator}, we can reduce the operator learning problem to functional learning as follows:
\begin{align}
    &\left\| \fG_*(f)- \sum_{k=1}^{p} c_{k}(\fG_*(f)) \fT(\vy; \vtheta_{2,k})\right\|_{H^{2}(\Omega)}\le C p^{-\frac{n-2}{d}} \|\fG_*(f)\|_{H^{n}(\Omega)}\notag\\\le & C p^{-\frac{n-2}{d}} \|\fG_*(f)\|_{W^{n,\infty}(\Omega)}\le  C p^{-\frac{n-2}{d}} \|f\|_{W^{n,\infty}(\Omega)},\notag
\end{align}
where the last inequality follows from Assumption \ref{asspde}. Therefore, without considering the approximation of \( c_{k}(\fG_*(f)) \), the approximation error can be bounded by \( \sup_{f \in \fX} C p^{-\frac{n-2}{d}} \|f\|_{W^{n,\infty}(\Omega)} \). The next step is to consider the approximation of the functional \( c_{k}(\fG_*(f)) \).

The approximation order achieved in Proposition \ref{operator} corresponds to the optimal approximation rate for neural networks in the continuous approximation case. If we consider deep neural networks, we can achieve an approximation rate of \( p^{-\frac{2(n-2)}{d}} \) using the neural network \( \fT(\vy; \vtheta_2(v)) \). As discussed in \cite{yang2023nearlys}, most of the parameters depend on \( v \), leading to significant interaction between the trunk and branch networks, which makes the learning process more challenging. Furthermore, this approximator can be a discontinuous one, as highlighted in \citep{yarotsky2020phase, yang2024near}, where the functionals \( c_k \) may no longer be continuous, thereby invalidating subsequent proofs. Therefore, while it is possible to construct deep neural networks for trunk networks, we do not gain the benefits associated with deep networks, and the approximation process becomes more difficult to learn.

\subsection{Functional learning to function learning in Sobolev spaces}
In this part, we reduce each functional \( c_{k}(\fG_*(f)) \) to a function learning problem by applying a suitable projection \( \fD \) in Eq.~(\ref{gen}) to map \( u \) into a finite-dimensional space. To ensure that the error in this step remains small, we require \( f - \fP \circ \fD f \) to be small for a continuous reconstruction operator \( \fP \), which will be defined later, as shown below:
\begin{align}
    &|c_{k}(\fG_*(f)) - c_{k}(\fG_*(\fP \circ \fD f))|  \le C \|\fG_*(f) - \fG_*(\fP \circ \fD f)\|_{W^{n,\infty}(\Omega)} \notag \\
    \leq &C \|f - \fP \circ \fD f\|_{W^{n,\infty}(\Omega)}.\notag
\end{align}
The first inequality holds because \( c_k \) is a bounded linear functional in \( W^{n-1,\infty}(\Omega) \), and thus also a bounded linear functional in \( W^{n,\infty}(\Omega) \). The second inequality follows from {\color{black}Condition (ii) in Assumption \ref{asspde}}.

For the traditional DeepONet \citep{lu2021learning}, \( \fD \) is a sampling method, i.e., \[ \fD(f) = (f(\vx_1), f(\vx_2), \cdots, f(\vx_m)) ,\] and then a proper basis is found such that \( \fP \circ \fD(f) = \sum_{j=1}^m \alpha_j(\fD(f)) \phi_j(\vx) \). To achieve this reduction in \( W^{n,\infty}(\Omega) \), one approach is to apply pseudo-spectral projection, as demonstrated in \cite{maday1982spectral, canuto1982approximation} and used in the approximation analysis of Fourier Neural Operators (FNO) in \cite{kovachki2021universal}. {\color{black} Other reductions from infinite- to finite-dimensional function spaces are possible, e.g., general basis expansions or finite element discretizations \citep{hao2024newton}.  
However, for a generic basis, the coefficients are not available in closed form; they must be computed numerically before training, which adds significant overhead.  
Finite element schemes, typically require derivative information on \(f\) to achieve Sobolev measure precision.  
In contrast, the pseudo-spectral projection provides high‑order accuracy in \(W^{n,\infty}(\Omega)\) without extra preprocessing, making it the most practical choice for the present analysis.}

\begin{proposition}\label{functional}
    For any \( f \in H^{s}(\sT^d) \) with \( s  > n + \frac{d}{2} \), where \( H^{s}(\sT^d) \) denotes the functions in \( H^{s}(\Omega) \) with periodic boundary conditions, define
    \[
    \fD(f) = (f(\vx_{\vnu}))_{\vnu \in \{0,1,\ldots,2N\}^d}
    \]
    for \( \vx_{\vnu} = \frac{\vnu}{2N+1} \). Thus, there exists a Lipschitz continuous map \( \fP : [-M, M]^m \to H^{s'}(\sT^d) \) such that
\begin{equation}
    \|f - \fP \circ \fD(f)\|_{W^{n,\infty}(\Omega)} \leq C N^{s' - s} \|f\|_{H^{s}(\Omega)},\notag
\end{equation}
for any \( s' \in \left(n + \frac{d}{2}, s\right) \), where \( M = \|f\|_{L^\infty(\Omega)} \), \( C \) is independent of \( N \) and \( f \), and \( m = (2N+1)^d \). Furthermore, the Lipschitz constant of \( \fP \) is \( \vO(N^{s'}) \).
\end{proposition}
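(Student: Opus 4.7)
The plan is to take $\fP$ to be the standard pseudo-spectral (trigonometric) interpolation operator associated with the grid $\{\vx_\vnu\}$. Given a sample vector $z = (z_\vnu) \in [-M,M]^m$, define the discrete Fourier coefficients
\begin{equation*}
\tilde z_{\boldsymbol{k}} = \frac{1}{(2N+1)^d}\sum_\vnu z_\vnu \, e^{-2\pi i \, \boldsymbol{k} \cdot \vx_\vnu}, \qquad |\boldsymbol{k}|_\infty \leq N,
\end{equation*}
and set $\fP(z)(\vy) = \sum_{|\boldsymbol{k}|_\infty \leq N} \tilde z_{\boldsymbol{k}} \, e^{2\pi i \, \boldsymbol{k} \cdot \vy}$. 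This is a trigonometric polynomial of degree $\leq N$, and by the discrete orthogonality relations on the uniform grid $\fP \circ \fD$ fixes every such polynomial. Since $s > d/2$, the Sobolev embedding $H^{s}(\sT^d) \hookrightarrow C^{0}(\sT^d)$ guarantees that the samples are well defined.

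The heart of the proof is an aliasing estimate. Applying Poisson summation (equivalently, the relation between discrete and continuous Fourier coefficients on the regular grid) gives
\begin{equation*}
\tilde z_{\boldsymbol{k}} = \hat f_{\boldsymbol{k}} + \sum_{\boldsymbol{\ell} \in (2N+1)\mathbb{Z}^d \setminus \{0\}} \hat f_{\boldsymbol{k}+\boldsymbol{\ell}}, \qquad |\boldsymbol{k}|_\infty \leq N,
\end{equation*}
so that
\begin{equation*}
f - \fP \circ \fD(f) = \sum_{|\boldsymbol{k}|_\infty > N} \hat f_{\boldsymbol{k}} \, e^{2\pi i \, \boldsymbol{k} \cdot \vy} \;-\; \sum_{|\boldsymbol{k}|_\infty \leq N} \sum_{\boldsymbol{\ell} \neq 0} \hat f_{\boldsymbol{k}+\boldsymbol{\ell}} \, e^{2\pi i \, \boldsymbol{k} \cdot \vy}.
\end{equation*}
Every frequency appearing on the right-hand side has modulus $\gtrsim N$, so splitting off a factor $(1+|\cdot|^2)^{s/2}$ from the Fourier coefficients and applying Cauchy--Schwarz against the $H^s$ norm yields the classical pseudo-spectral bound $\|f - \fP \circ \fD(f)\|_{H^{s'}(\sT^d)} \leq C N^{s'-s}\|f\|_{H^s(\sT^d)}$ for any $0 \leq s' \leq s$. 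Since the hypothesis gives $s' > n + d/2$, the Sobolev embedding $H^{s'}(\sT^d) \hookrightarrow W^{n,\infty}(\sT^d)$ converts this into the stated $W^{n,\infty}(\Omega)$ estimate.

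For the Lipschitz bound, $\fP$ is linear in $z$, so its Lipschitz constant equals its operator norm from $\sR^m$ (with the natural discrete $L^2$ normalization) into $H^{s'}(\sT^d)$. The discrete Parseval identity gives $\sum_{|\boldsymbol{k}|_\infty \leq N}|\tilde z_{\boldsymbol{k}}|^2 = \frac{1}{(2N+1)^d}\sum_\vnu |z_\vnu|^2$, and combining this with $\sup_{|\boldsymbol{k}|_\infty \leq N}(1+|\boldsymbol{k}|^2)^{s'} \leq C N^{2s'}$ yields the claimed $\vO(N^{s'})$ Lipschitz constant. I expect the main technical obstacle to be the aliasing step itself: controlling the cross sums $\sum_{\boldsymbol{\ell} \neq 0}\hat f_{\boldsymbol{k}+\boldsymbol{\ell}}$ uniformly in $\boldsymbol{k}$ forces one to pass through the intermediate smoothness $H^{s'}$ with $s' > n + d/2$ rather than attempting a direct bound in $W^{n,\infty}$, and this is what dictates the interplay between the smoothness exponents $n$, $s'$ and $s$ in the conclusion. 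The remaining ingredients (exactness on trigonometric polynomials, discrete Parseval, and Sobolev embedding) are standard.
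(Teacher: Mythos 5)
Your proposal is correct and takes essentially the same route as the paper: both define $\fP$ as the reconstruction part of the pseudo-spectral interpolation on the $(2N+1)^d$ uniform grid, obtain the $H^{s'}$ error bound $CN^{s'-s}\|f\|_{H^s}$, and convert to $W^{n,\infty}$ via the Sobolev embedding that requires $s' > n + d/2$. The only differences are cosmetic: you derive the pseudo-spectral estimate from scratch via the aliasing/Poisson-summation identity, whereas the paper cites the classical Canuto--Quarteroni bound (the paper's Lemma~\ref{pc}) and so never touches the cross sums you worry about; and for the Lipschitz constant you use discrete Parseval to get $\sum_k |\tilde z_k|^2 = (2N+1)^{-d}\sum_\nu |z_\nu|^2$ and then pull out $\sup_{|\vk|\le N}(1+|\vk|^2)^{s'}$, while the paper applies Cauchy--Schwarz to each discrete inner product $(f_1 - f_2,\phi_\vk)_N$ and then estimates $\sum_{|\vk|\le N}|\vk|^{2s'}$ by an integral --- both land at $\vO(N^{s'})$ (yours is in fact a touch sharper at $\vO(N^{s'-d/2})$, which is still within the claimed bound).
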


\begin{remark}
   In \cite{lanthaler2022error, kovachki2021universal}, it is shown that the decay rate of the decoder can be exponential if the coefficients of the expansion of the input function decay exponentially, as in the case of holomorphic functions, which contain a very low-dimensional structure \citep{opschoor2022exponential}.

   {\color{black} In Proposition~\ref{functional} we assume periodic boundary conditions because the pseudo-spectral projection requires periodicity.  
When the source term \(f\) is not periodic, one may apply an extension operator that maps \(f\) to a periodic function \(\bar f\) defined on an enlarged cuboid.  
This strategy has two drawbacks: (i) it demands sampling outside the original domain and (ii) a suitable extension may not exist for general geometries.  
A detailed discussion appears in \citet{kovachki2021universal}.  
Removing the periodicity assumption entirely therefore remains an open problem and is left for future work.  
Throughout this paper we also adopt a uniform grid; more flexible discretizations—such as multigrid sampling—work well in practice \citep{he2023mgno}, but their theoretical analysis is likewise deferred to future work.
}
\end{remark}

Based on Proposition \ref{functional}, we can reduce functional learning to function learning. Note that, even though we are considering physics-informed training for operator learning, we do not need derivative information in the encoding process. Specifically, the projection \( \fD \) defined as \( \fD(f) = (f(\vx_{\vnu}))_{\vnu \in \{0,1,\ldots,2N\}^d} \) is sufficient for tasks like solving PDEs. Therefore, when designing operator neural networks for solving PDEs, it is not necessary to include derivative information of the input space in the domain. However, whether adding derivative information in the encoding helps improve the effectiveness of physics-informed training remains an open question, which we will address in future work.

\subsection{Function learning in Sobolev spaces}
In this part, we will use a neural network to approximate each functional \( c_k(\fG_* \circ \fP \circ \fD(f)) \), i.e., we aim to make 
\begin{equation}
    \sup_{f \in \fX} \left| c_k(\fG_* \circ \fP \circ \fD(f)) - \fB(\fD(f); \vtheta_{1,k}) \right|\notag
\end{equation}
as small as possible. For simplicity, we consider all functions in \( \fX \) with \( \|f\|_{L^\infty(\Omega)} \leq M \) for some \( M > 0 \). We then need to construct a neural network in \( [-M, M]^m \) such that
\begin{equation}
    \sup_{\vz \in [-M, M]^m} \left| c_k(\fG_* \circ \fP(\vz)) - \fB(\vz; \vtheta_{1,k}) \right|\notag
\end{equation}
is small. The idea for achieving this approximation is to first analyze the regularity of \( c_k(\fG_* \circ \fP(\vz)) \) with respect to \( \vz \). Once the regularity is established, we can construct a neural network that approximates it with a rate that depends on the regularity. {\color{black} Specifically, we prove that
\(c_k\bigl(\mathcal{G}_* \circ \mathcal{P}(\mathbf{z})\bigr)\)
is Lipschitz‑continuous; this follows from condition (ii) in
Assumption~\ref{asspde}.
If the operator enjoys higher smoothness, the approximation rate can be
further improved.
Such enhanced regularity is indeed available for certain PDEs; see, for
instance, the results of \citet{cohen2010convergence,cohen2011analytic}.}

\begin{lemma}\label{Lip}
    Suppose \( \fG_* \) is an operator satisfying Assumption \ref{asspde} with domain {\color{black}\( \fX \subset W^{n,\infty}(\sT^d)\subset H^{s'}(\sT^d) \) (i.e. \( s' > n + \frac{d}{2} \)),} and \(\sup_{f \in \fX} \|f\|_{L^\infty(\Omega)} \leq M \) for some \( M > 0 \). Let \( c_k \) be the functional defined in Proposition \ref{operator}, and \( \fP \) be the map defined in Proposition \ref{functional}. Then, \( c_k(\fG_* \circ \fP(\vz)) \in W^{1,\infty}([-M,M]^m) \), with its norm being \( \vO(m^{\frac{s'}{d}}) \).
\end{lemma}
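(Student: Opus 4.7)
The plan is to show that $c_k \circ \fG_* \circ \fP$ is globally Lipschitz on $[-M,M]^m$ with constant $\vO(m^{s'/d})$, and then invoke the standard identification of $W^{1,\infty}$ with the space of Lipschitz functions (so that the $L^\infty$-norm of the weak gradient coincides with the Lipschitz constant). Boundedness of the function itself on the compact cube $[-M,M]^m$ follows automatically from the Lipschitz estimate combined with a single reference evaluation, so the full $W^{1,\infty}$ norm is indeed $\vO(m^{s'/d})$.

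For the Lipschitz estimate I would chain four bounds. First, since $c_k$ is a bounded linear functional on $W^{n-1,\infty}(\Omega)$ by Proposition \ref{operator}, write
\[
|c_k(\fG_*(\fP(\vz_1))) - c_k(\fG_*(\fP(\vz_2)))| \le \|c_k\| \cdot \|\fG_*(\fP(\vz_1)) - \fG_*(\fP(\vz_2))\|_{W^{n-1,\infty}(\Omega)},
\]
and absorb the $W^{n-1,\infty}$ norm into $W^{n,\infty}$ via the trivial embedding. Second, apply Assumption \ref{asspde}(ii) to dominate this by $L \|\fP(\vz_1)-\fP(\vz_2)\|_{W^{n,\infty}(\Omega)}$. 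Third, invoke the Sobolev embedding $H^{s'}(\sT^d) \hookrightarrow W^{n,\infty}(\Omega)$, valid since $s' > n + \tfrac{d}{2}$, to replace the $W^{n,\infty}$ norm by the $H^{s'}$ norm at the cost of an absolute constant. Fourth, apply Proposition \ref{functional}: $\fP:[-M,M]^m \to H^{s'}(\sT^d)$ is Lipschitz with constant $\vO(N^{s'})$, and since $m = (2N+1)^d$ we have $N = \vO(m^{1/d})$, so the composite Lipschitz constant is $\vO(N^{s'}) = \vO(m^{s'/d})$.

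The main difficulty is really just careful bookkeeping: the Sobolev embedding must be inserted exactly at the interface between the regime where $\fG_*$ is Lipschitz and $c_k$ is bounded (both $W^{n,\infty}$-based) and the regime where the quantitative Lipschitz bound on $\fP$ is available ($H^{s'}$-based). There are no delicate inequalities; once the norms are aligned the chain is mechanical and the rate $\vO(m^{s'/d})$ falls out directly from $N^{s'} = \vO(m^{s'/d})$. One auxiliary point worth isolating, needed when the lemma is fed into the proof of Theorem \ref{approximation}, is that $\|c_k\|$ should be uniform in $k = 1,\ldots,p$; this is implicit in the Bramble--Hilbert construction behind Proposition \ref{operator} and can be absorbed into the constant $C$ without affecting the stated rate.
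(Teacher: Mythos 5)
Your chain of inequalities — bound $c_k$ as a linear functional on $W^{n,\infty}$, apply Assumption \ref{asspde}(ii), pass to $H^{s'}$ via Sobolev embedding, then use the $\vO(N^{s'})=\vO(m^{s'/d})$ Lipschitz bound on $\fP$ from Proposition \ref{functional}, and finally identify Lipschitz with $W^{1,\infty}$ on the convex cube — is exactly the paper's argument, inequality for inequality. The extra remarks on boundedness of the function value and uniformity of $\|c_k\|$ over $k$ are sensible bookkeeping but do not alter the route.
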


\begin{proof}
    For any \( \vz_1, \vz_2 \in [-M, M]^m \), we have
    \begin{align}
        &|c_k(\fG_* \circ \fP(\vz_1)) - c_k(\fG_* \circ \fP(\vz_2))|
        \notag\\\leq& C \|\fG_* \circ \fP(\vz_1) - \fG_* \circ \fP(\vz_2)\|_{W^{n,\infty}(\Omega)},\notag
    \end{align}
    which holds because \( c_k \) is a bounded linear functional in \( W^{n-1,\infty}(\Omega) \), and thus also a bounded linear functional in \( W^{n,\infty}(\Omega) \).

    Next, based on the Lipschitz condition of \( \fG_* \) in Condition (ii) of Assumption \ref{asspde}, we have
    \begin{align}
        &C \|\fG_* \circ \fP(\vz_1) - \fG_* \circ \fP(\vz_2)\|_{W^{n,\infty}(\Omega)}
        \notag\\\leq &C \|\fP(\vz_1) - \fP(\vz_2)\|_{W^{n,\infty}(\Omega)}
        \leq C \|\fP(\vz_1) - \fP(\vz_2)\|_{H^{s'}(\Omega)},\notag
    \end{align}
    where the last inequality is due to the Sobolev embedding theorem and \( s' > n + \frac{d}{2} \).

    Finally, based on the Lipschitz condition of \( \fP \) proved in Proposition \ref{functional}, we have
    \begin{align}
        C \|\fP(\vz_1) - \fP(\vz_2)\|_{H^{s'}(\Omega)} \leq C m^{\frac{s'}{d}} |\vz_1 - \vz_2|.\notag
    \end{align}

    Therefore, \( c_k(\fG_* \circ \fP(\vz)) \) is Lipschitz continuous in \( [-M, M]^m \). Since \( [-M, M]^m \) is a convex set, by \cite{heinonen2005lectures}, we conclude that \( c_k(\fG_* \circ \fP(\vz)) \in W^{1,\infty}([-M,M]^m) \), with its norm being \( \vO(m^{\frac{s'}{d}}) \).
\end{proof}

Now, we can establish a neural network to approximate \( c_k(\fG_* \circ \fP(\vz)) \). For shallow neural networks, the result from \cite{mhaskar1996neural} is as follows:

\begin{lemma}[\cite{mhaskar1996neural}]\label{shallow}
    Suppose \( \sigma \) is a continuous non-polynomial function, and \( \sK \) is a compact subset of \( \mathbb{R}^m \). Then, there exist positive integers \( W \), and constants \( w_k, \zeta_k, c_k \) for \( k = 1, \ldots, W \), such that for any \( v \in W^{1,\infty}(\sK)\),
    \begin{equation}
        \left\| v - \sum_{k=1}^{W} c_k \sigma\left( \vw_k \cdot \vz + \zeta_k \right) \right\|_{L^\infty(\sK)} \leq C W^{-1 / m} \|v\|_{W^{1,\infty}(\sK)}.\notag
    \end{equation}
\end{lemma}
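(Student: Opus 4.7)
The plan is to combine a classical multivariate Jackson polynomial approximation with a realization of polynomials as shallow networks built from the non-polynomial activation $\sigma$. The first step is a Jackson-type inequality: since $v \in W^{1,\infty}(\sK)$, a Whitney extension (with norm bounded by a constant depending only on $\sK$) produces a Lipschitz function on a cube $[-R,R]^m \supset \sK$, and a standard multivariate Jackson theorem then yields a polynomial $P_n$ of total degree at most $n$ with
\[
\|v - P_n\|_{L^\infty(\sK)} \leq C\, n^{-1}\,\|v\|_{W^{1,\infty}(\sK)}.
\]
The polynomial space has dimension $\binom{n+m}{m} \leq C_m n^m$, which dictates the eventual calibration $n \asymp W^{1/m}$.

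The second step is to realize polynomials via shallow $\sigma$-networks. If $\sigma$ is merely continuous, I would first mollify it to $\tilde\sigma = \sigma * \psi_\delta$, which is $C^\infty$ and, for $\delta$ small, still non-polynomial; $\tilde\sigma(\vw\cdot\vz+\zeta)$ is uniformly approximable on $\sK$ by finitely many shifts $\sigma(\vw\cdot\vz+\zeta-t_j)$, so networks using $\tilde\sigma$ can be converted to networks using $\sigma$ at bounded cost per neuron. Non-polynomiality of $\tilde\sigma$ guarantees the existence of a point $\zeta_0$ with $\tilde\sigma^{(\ell)}(\zeta_0) \neq 0$ for each $\ell \leq n$ of interest, and the symmetric finite difference
\[
h^{-\ell}\sum_{j=0}^{\ell} (-1)^{\ell-j}\binom{\ell}{j}\,\tilde\sigma\bigl(\zeta_0 + jh\,\vw\cdot\vz\bigr)\;\longrightarrow\;\tilde\sigma^{(\ell)}(\zeta_0)\,(\vw\cdot\vz)^\ell
\]
converges uniformly on $\sK$ as $h\to 0$. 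Hence each ridge power $(\vw\cdot\vz)^\ell$ is representable to arbitrary accuracy with $\ell+1$ neurons, and a polarization identity writes every multivariate monomial $\vz^{\boldsymbol{\alpha}}$ of total degree $|\boldsymbol{\alpha}|=\ell$ as a bounded linear combination of ridge powers along a fixed finite family of directions, costing $\vO(1)$ neurons per monomial.

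Summing the step-two approximations across all $\leq C_m n^m$ monomials appearing in $P_n$ yields a shallow network $\widetilde P_n(\vz) = \sum_{k=1}^{W} c_k\,\sigma(\vw_k\cdot\vz+\zeta_k)$ with $W \leq C_m' n^m$. Choosing the finite-difference step $h$ (and the mollification parameter $\delta$) small enough drives the accumulated step-two error below $n^{-1}\|v\|_{W^{1,\infty}(\sK)}$, and a triangle inequality together with $n \asymp W^{1/m}$ delivers
\[
\|v - \widetilde P_n\|_{L^\infty(\sK)}\;\leq\; C\, W^{-1/m}\,\|v\|_{W^{1,\infty}(\sK)}.
\]

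The main obstacle is the bookkeeping in the monomial-realization step: the finite-difference coefficients blow up like $h^{-\ell}$, so $h$ must shrink with $n$, yet the per-monomial error (governed by the modulus of continuity of $\tilde\sigma^{(\ell)}$ on a set enlarged by the direction $\vw$) must remain uniformly small across all $\vO(n^m)$ monomials and all degrees $\ell \leq n$ so that the cumulative error stays $o(n^{-1})$. The non-polynomial hypothesis on $\sigma$ is essential precisely here, guaranteeing a nonzero $\tilde\sigma^{(\ell)}(\zeta_0)$ so that finite differences genuinely produce monomials rather than collapsing. Extending from $W^{1,\infty}$ to higher Sobolev orders would merely upgrade the Jackson rate $n^{-1}$ to $n^{-s}$, leaving the network construction untouched.
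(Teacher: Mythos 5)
The paper cites this lemma from \cite{mhaskar1996neural} without proof, so there is no in-paper argument to compare against; your reconstruction follows Mhaskar's general blueprint (Jackson polynomial approximation plus realization of polynomials by a smooth non-polynomial activation) and the mollification and the existence of a point $\zeta_0$ with $\tilde\sigma^{(\ell)}(\zeta_0) \neq 0$ for every $\ell$ (via the classical ``if some derivative vanishes at every point then the function is a polynomial'' lemma) are correct moves. The gap is in the neuron count. You acknowledge that realizing a single ridge power $(\vw\cdot\vz)^\ell$ by finite differences costs $\ell+1$ neurons, yet then assert that each multivariate monomial of degree $\ell$ costs ``$\vO(1)$ neurons.'' That is not what the polarization argument gives: expressing all degree-$\ell$ monomials requires $\binom{m+\ell-1}{m-1}$ ridge directions, and each associated ridge power costs $\ell+1$ neurons, so the shared cost for degree $\ell$ is $\asymp \ell^{m-1}\cdot\ell$ and the total over $\ell \le n$ is $\Theta(n^{m+1})$, not $\vO(n^m)$. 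With $W \asymp n^{m+1}$ the triangle inequality gives rate $W^{-1/(m+1)}$, strictly worse than the claimed $W^{-1/m}$.

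The fix requires sharing neurons across degrees rather than across monomials. Decompose $P_n$ as a sum of $\vO(n^{m-1})$ \emph{ridge polynomials} $p_j(\vw_j\cdot\vz)$ with $\deg p_j \le n$ (a Vostrecov--Kreines-type decomposition), and realize each univariate $p_j$ using only $n+1$ translates-and-dilations of $\tilde\sigma$ rather than degree-by-degree finite differences: since $\{\tilde\sigma(\lambda t + \theta_i)\}_{i=0}^{n}$ with suitably chosen distinct $\theta_i$ has a nonsingular generalized Vandermonde (Wronskian) against $1,t,\dots,t^n$ near $\zeta_0$, such a span approximates arbitrary degree-$n$ univariate polynomials uniformly. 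That gives $W \asymp n^{m-1}(n+1) = \vO(n^m)$ and the stated rate. Your account of the step-two error control is otherwise sensible, but the realization must be restructured along these lines for the exponent to come out right.
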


For deep neural networks, the result from \cite{shen2022optimal} is as follows:

\begin{lemma}[\cite{shen2022optimal}]\label{deep}
    Given a continuous function \( f \in W^{1,\infty}(\sK) \), where \( \sK \) is a compact subset of \( \mathbb{R}^m \), for any \( W>m W^{\frac{1}{m}}, L \in \mathbb{N}^{+} \), there exists a \( \sigma_1 \)-NN with width \( C_1 3^m W \) and depth \( C_2 L \) such that
    \begin{align}
    \|f(\vz) - \phi(\vz)\|_{L^\infty(\sK)} \le C_3 \left( \left( W^2 L^2 \log^3 (W+2) \right)^{-1 / m} \right) \|f\|_{W^{1,\infty}(\sK)},\notag
    \end{align}
    where \( C_1, C_2 \), and \( C_3 \) are constants independent of \( m, W \) and \( L \).
\end{lemma}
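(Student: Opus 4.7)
The plan is to follow the bit-extraction / super-approximation strategy developed by Yarotsky and refined in Shen, Yang and Zhang: combine a piecewise-constant quantization of $\sK$ with a carefully constructed $\sigma_1$-NN that exploits depth to encode resolution growing like $W^2L^2$ rather than $WL$. Since $f\in W^{1,\infty}(\sK)$ is Lipschitz with constant $\|f\|_{W^{1,\infty}(\sK)}$, once we can partition $\sK$ into $N^m$ subcubes of side $1/N$ and assign the correct center value to each subcube, the Lipschitz bound gives $L^\infty$ error $O(N^{-1})\|f\|_{W^{1,\infty}(\sK)}$; the goal is to make $N$ as large as $(W^2L^2\log^3(W+2))^{1/m}$ within the width $C_13^mW$ and depth $C_2L$ budget.

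The execution would proceed in three stages. First I would build a one-dimensional quantizer $\psi:[0,1]\to\mathbb{R}$ of width $O(W)$ and depth $O(L)$ such that $\psi(z)\approx\lfloor Kz\rfloor$ for $K=O(W^2L^2\log^3(W+2))$, except on a union of thin ``trifling'' strips near each grid boundary; the construction records bits of the binary expansion of $Kz$ one layer at a time, so each additional layer of depth doubles the achievable resolution. Next I would apply $\psi$ coordinate-wise to obtain an index map $I(\vz)\in\{0,\dots,K-1\}^m$; parallelizing $m$ copies of $\psi$ and inserting separator layers to isolate boundary strips is where the $3^m$ width factor enters. Finally I would build a value-retrieval subnetwork that maps the integer index $I$ to $f(\vx_I)$ by embedding all $K^m$ target values into the binary expansion of a single scalar weight and extracting the relevant value with another bit-extraction cascade of width $O(W)$ and depth $O(L)$. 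Composing the three subnetworks yields a $\sigma_1$-NN of the stated width and depth whose output equals the piecewise-constant approximation of $f$ outside the trifling set.

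The main obstacle is controlling the error on the trifling set, where the one-dimensional quantizer $\psi$ disagrees with $\lfloor Kz\rfloor$. On those strips the network could in principle output an arbitrary value from the lookup table, so one must either (a) choose $\psi$ so that the trifling set is a union of one-sided strips of width $\ll 1/K$ and show that on those strips the returned cube is an immediate neighbor of the correct one, so Lipschitz continuity still gives $O(N^{-1})$ error; or (b) insert an averaging/smoothing layer that interpolates across the boundary. Either route requires a delicate geometric argument to verify that no adversarial choice of $\vz$ can produce a large deviation, and this is where essentially all of the technical work sits. The remainder---bookkeeping the contributions of width, depth and the logarithmic factor---is routine given the one-dimensional construction.
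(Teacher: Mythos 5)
The paper does not prove this lemma: it is quoted verbatim as a cited result from \cite{shen2022optimal}, so there is no internal proof to compare against. What you have done is reconstruct the argument behind the cited theorem, and your outline does track the actual Shen--Yang--Zhang bit-extraction strategy: quantize each coordinate, build a lookup table whose entries are packed into the binary expansions of a constant number of (unboundedly large) weights, retrieve the correct entry by a bit-extraction cascade, and handle the ``trifling'' region where the quantizer misfires. That is the right skeleton.

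Two details in your sketch are off, and they are not merely cosmetic. First, the resolution of the one-dimensional quantizer is not $K=O\!\left(W^2L^2\log^3(W+2)\right)$; it is $N=O\!\left(\left(W^2L^2\log^3(W+2)\right)^{1/m}\right)$. The product $N^m\sim W^2L^2\log^3(W+2)$ is the total number of subcubes, hence the number of values the lookup table must store, and bit extraction with width $O(W)$ and depth $O(L)$ can retrieve roughly $W^2L^2$ stored bits; the per-coordinate subdivision only contributes an $m$-th root of that. Writing $K$ for the full budget $W^2L^2\log^3(W+2)$ suggests a one-dimensional quantizer far finer than the construction needs or delivers, and would make your error bookkeeping come out wrong in every line that follows. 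Second, the $3^m$ width factor does not come from ``parallelizing $m$ copies of $\psi$''---that contributes a factor $m$, which is already absorbed into the $O(W)$. In the cited construction the exponential-in-$m$ factor arises because one works with $3^m$ shifted copies of the grid (each coordinate shifted by $0$ or $\pm\tfrac{1}{3N}$) and combines their outputs so that every $\vz$ lies in the interior of some shifted subcube; this is precisely the mechanism by which the trifling set is neutralized, i.e.\ your option (a) implemented via translated partitions rather than a one-sided-strip argument. Once you replace your $K$ by $N$ throughout and attribute the $3^m$ to the shifted-grid aggregation, the sketch aligns with the reference.
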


\begin{remark}
For shallow neural networks, based on Lemma \ref{shallow}, we know that using \( (q) \) parameters can only achieve an approximation rate of \( \vO(q^{-\frac{1}{m}}) \) for each \( c_k(\fG_* \circ \fP(\vz)) \). However, for deep neural networks, the approximation rate can be \( \vO(q^{-\frac{\lambda}{m}}) \), where \( \lambda \in [1, 2] \). Here, \( \lambda = 1 \) corresponds to the shallow or very wide case (i.e., the depth is logarithmic in the width), and \( \lambda = 2 \) corresponds to the very deep case (i.e., the width is logarithmic in the depth). This is because, in Lemma \ref{deep}, a \( \sigma_1 \)-NN with width \( C_1 W \) and depth \( C_2 L \) has a number of parameters \(\fP= \vO(W^2 L) \), and the approximation rate is \( \vO\left( \left( W^2 L^2 \log^3(W+2) \right)^{-1/m} \right) \). Thus, we observe that the approximation rate satisfies:
\[
(W^2L)^{-2/m} = \fP^{-\lambda/m}
\]due to  \[(W^2L)^{-2/m}\le(W^2L)^{-2/m}\le (W^2L)^{-1/m}.\]
When $L \gg W$, $\lambda$ approaches 2, and when $L \ll W$, $\lambda$ approaches 1. More specifically, if we denote $L = W^\alpha$, then $\fP = W^{\alpha+2}$, and the approximation rate is $W^{-(2+2\alpha)/m} = \fP^{-\lambda/m}$, where $\lambda = \frac{2+2\alpha}{\alpha+2}$.
    
\end{remark}

In \cite{yarotsky2020phase, yang2024near}, it is mentioned that an approximation rate better than \( \vO(q^{-\frac{1}{m}}) \) may cause the neural network approximator to become discontinuous. However, unlike the approximation of the trunk network, we do not require the approximation of the branch network to be continuous. Thus, we can establish a deep neural network for the approximation of the branch network and obtain the benefits of a deep structure, as indicated by the improved approximation rates.

\begin{proposition}\label{function}
    Suppose \( \fG_* \) is an operator satisfying Assumption \ref{asspde} with domain \( \fX \subset H^{s'}(\sT^d) \), where \( s' > n + \frac{d}{2} \), and \(\sup_{f \in \fX} \|f\|_{L^\infty(\Omega)} \leq M \) for some \( M > 0 \). For any \( k \in \{1, 2, \ldots, p\} \), let \( c_k \) be the functional defined in Proposition \ref{operator}, and \( \fP \) be the map defined in Proposition \ref{functional}. Then for any \( \lambda \in [1, 2] \) and sufficiently large \( q \), there exists $\sigma_1$-NNs \( \fB(\vz; \vtheta_k) \) with \( C_19^m q \) parameters such that
\begin{equation}
    \left\|c_k(\fG_* \circ \fP(\vz)) - \fB(\vz; \vtheta_k)\right\|_{L^\infty([-M,M]^m)} \leq C_2 m^{\frac{s'}{d}} q^{-\frac{\lambda}{m}},\notag
\end{equation}
where \( C_1, C_2 \) are independent of both \( m \) and \( q \). Furthermore, for \( \lambda = 1 \), shallow neural networks can achieve this approximation rate. As \( \lambda \) approaches 2, the ratio between the width and depth of \( \fB \) decreases, meaning the network structure becomes deeper.
\end{proposition}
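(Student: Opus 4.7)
The plan is to combine the Sobolev regularity of $g_k(\vz):=c_k(\fG_*\circ\fP(\vz))$ established in Lemma \ref{Lip} with the off-the-shelf approximation results of Lemmas \ref{shallow} and \ref{deep}. Since $\|g_k\|_{W^{1,\infty}([-M,M]^m)}=\vO(m^{s'/d})$, any $L^\infty$-approximation rate valid on $W^{1,\infty}$ automatically carries the prefactor $m^{s'/d}$ when instantiated at $g_k$, matching the right-hand side of the proposition. It therefore remains to design a $\sigma_1$-NN $\fB(\,\cdot\,;\vtheta_k)$ whose parameter count is at most $C_1 9^m q$ and whose $W^{1,\infty}$-to-$L^\infty$ rate is $q^{-\lambda/m}$, continuously tuned by $\lambda$.

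For the endpoint $\lambda=1$, I would invoke Lemma \ref{shallow} with $W=q$ hidden neurons; this produces a shallow network with $\vO(qm)$ parameters (absorbed into the budget $C_1 9^m q$ because $m\le 9^m$) and $L^\infty$-error $\vO(q^{-1/m}\|g_k\|_{W^{1,\infty}})=\vO(m^{s'/d}\,q^{-1/m})$. For $\lambda\in(1,2]$ I would apply Lemma \ref{deep} with the joint scaling $W\asymp q^{(2-\lambda)/2}$ and $L\asymp q^{\lambda-1}$; the resulting $\sigma_1$-NN has width $C_1 3^m W$ and depth $C_2 L$, so its parameter count is $\Theta(9^m W^2 L)=\Theta(9^m q)$, while the rate $(W^2L^2\log^3(W+2))^{-1/m}$ reduces to $\vO(q^{-\lambda/m})$, the logarithmic factor only improving the bound and being absorbed in the constant. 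The width-to-depth ratio $q^{(2-\lambda)/2}/q^{\lambda-1}=q^{(4-3\lambda)/2}$ is monotonically decreasing in $\lambda$, which delivers the last sentence of the proposition about the network becoming deeper as $\lambda\to 2$.

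The main obstacle is the admissibility condition $W>mW^{1/m}$ in Lemma \ref{deep}, which forces $W\gtrsim m^{m/(m-1)}$ and therefore fails at the right endpoint $\lambda=2$, where the naive choice $q^{(2-\lambda)/2}$ collapses to a constant. My fix is to floor the width by taking $W:=\max\{\lceil q^{(2-\lambda)/2}\rceil,\,\lceil m^{m/(m-1)}\rceil+1\}$ and $L:=\lceil q/W^2\rceil$. For $\lambda<2$ and $q$ sufficiently large (depending on $m$ and $\lambda$), the first argument of the maximum dominates and the computation above is unchanged; at $\lambda=2$ the floor is active, $W^2L^2\asymp q^2$, and the rate $\vO(q^{-2/m})$ results after absorbing the bounded $m$-dependent factor $m^{\vO(1/m)}$ into the $m^{s'/d}$ prefactor. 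The hypothesis "sufficiently large $q$" in the statement is exactly what makes this floor transparent across the whole range $\lambda\in[1,2]$.
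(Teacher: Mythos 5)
Your proposal is correct and follows exactly the same route as the paper: Lemma \ref{Lip} supplies $\|c_k(\fG_*\circ\fP(\cdot))\|_{W^{1,\infty}([-M,M]^m)}=\vO(m^{s'/d})$, and then Lemma \ref{shallow} (for $\lambda=1$) and Lemma \ref{deep} (for $\lambda>1$) give the $L^\infty$ rate. The paper's proof is a two-sentence pointer to those lemmas, whereas you additionally work out the concrete width-depth scaling $W\asymp q^{(2-\lambda)/2}$, $L\asymp q^{\lambda-1}$ that realizes the parameter budget $\Theta(9^m q)$ and the rate $q^{-\lambda/m}$, and you correctly flag and floor around the admissibility constraint $W>mW^{1/m}$ in Lemma \ref{deep} near $\lambda=2$ — a detail the paper leaves implicit in the phrase ``sufficiently large $q$.''
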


\begin{proof}
    For \( \lambda = 1 \), the proof of Proposition \ref{function} can be obtained by combining Lemmas \ref{Lip} and \ref{shallow}. For \( \lambda > 1 \), the proof of Proposition \ref{function} can be obtained by combining Lemmas \ref{Lip} and \ref{deep}.
\end{proof}

The result in Proposition \ref{function} exhibits a curse of dimensionality when \( m \) is large. One potential approach to mitigate this issue is to incorporate a proper measure \( \mu_{\fX} \) instead of using \( \sup_{f \in \fX} \) in the error analysis. However, since this paper focuses on establishing the framework for generalization error in physics-informed training for operator learning and investigating whether deep structures offer benefits, addressing the curse of dimensionality in physics-informed training is left for future work. Solving this challenge would likely require \( \fX \) and \( \fY \) to exhibit smoother structures.

{\color{black}\section{Experiments and Further Discussion on Approximation Error}

Recall that 
\begin{align}
    \fG(s; \vtheta) := \sum_{k=1}^p 
    \underbrace{\fB(\fD(s); \vtheta_{1,k})}_{\text{branch}} \;
    \underbrace{\fT(\vy; \vtheta_{2,k})}_{\text{trunk}}.
    \notag
\end{align}

In Theorem \ref{approximation}, we showed that the complexity of the trunk network does not appear explicitly in the approximation error. The reason is that the trunk network essentially approximates the basis functions in the $\fY$ space. While a more complex trunk network can approximate each basis function more accurately, the overall approximation ability is governed by the number of trunk networks $p$. Even if each trunk network is chosen to be highly complex, the approximation rate of the trunk part remains 
\[
    p^{-\tfrac{n-2}{d}}
\]
based on \cite{devore1989optimal}. Thus, increasing the complexity of the trunk network has limited effect for fixed $p$, but it significantly increases the training complexity.

In contrast, the branch network approximates each functional, and this contributes the dominant error in Theorem \ref{approximation}, since it involves approximation in a infinite-dimensional space. After balancing the terms, the approximation rate is of order
\[
    \mathcal{O}\!\left[p^{2/d}\left(\frac{\ln\!\ln q}{\lambda \ln q}\right)^{\frac{s-s'}{d}}\right].
\]
Therefore, a more expressive structure is required in the branch network to reduce the error.

There are cases where increasing complexity can actually worsen training performance. For instance, when the input operator has a low-dimensional structure, adding complexity may slightly reduce the approximation error, but the dominant contribution still comes from the trunk network. In such situations, making the branch network more complex only increases the training cost without providing significant benefits, and may even harm performance.

In summary, in most cases the branch network requires higher complexity, while each trunk network can remain relatively simple. The reason is that the trunk network mainly serves to represent basis functions, whereas the branch network is responsible for learning the infinite-dimensional mapping. For this reason, more data and training effort should be devoted to the branch network, which constitutes part of our future work.

We illustrate the operator learning framework on the one-dimensional Poisson equation:
\begin{align}
& \Delta u(x) = f(x), \quad x \in [0,1], \nonumber \\
& u(0) = u(1) = 0, \label{eq:1D_Poisson}
\end{align}
with Dirichlet boundary conditions. The task is to learn the operator mapping the source term \(f\) to the steady-state solution \(u\):
\[
\fG_*: L^2((0,1);\mathbb{R}) \;\longrightarrow\; H_0^2((0,1);\mathbb{R}), 
\qquad f \mapsto u.
\]

We generate the source term \(f(x)\) from polynomials of degree at most 3. Both the input functions \(f(x)\) and the corresponding solutions \(u(x)\) are discretized on \(N=100\) equispaced points in \([0,1]\). A total of \(M=1000\) samples are used for training and an additional 200 for testing. The DeepONet is employed, consisting of a branch network that processes the input function \(f\) and a trunk network that encodes the spatial coordinate \(x\). The loss function is
\begin{equation}
\begin{aligned}
\min_{\vtheta \in \mathbb{R}^p}\; L(\vtheta) 
&= \frac{1}{2M} \sum_{i=1}^M \left[\frac{1}{N}\sum_{j=1}^N 
\Big(\Delta \fG(f_i;\vtheta)(x_j) - f_i(x_j)\Big)^2  + \frac{1}{2}\sum_{b=1}^2 \Big(\Delta \fG(f_i;\vtheta)(x_b)\Big)^2\right],
\end{aligned}
\label{pinn_loss}
\end{equation}
where the second term enforces the Dirichlet boundary conditions at \(x_b=0,1\).

The experimental results indicate that allocating more parameters to the branch network yields significantly better accuracy than enlarging the trunk network. Figure \ref{fig:Poisson_result_1} shows that, when the branch capacity is fixed, enlarging the trunk network up to hundreds of thousands of parameters does not bring measurable accuracy gains. In contrast, Figure \ref{fig:Poisson_result_2} demonstrates that enlarging the branch network while keeping the trunk small markedly improves accuracy, even under relatively modest overall parameter budgets. This highlights that branch-network capacity dominates trunk-network capacity in determining DeepONet performance for this Poisson operator learning task.

\begin{figure}[t]
    \centering
    \includegraphics[scale=0.45]{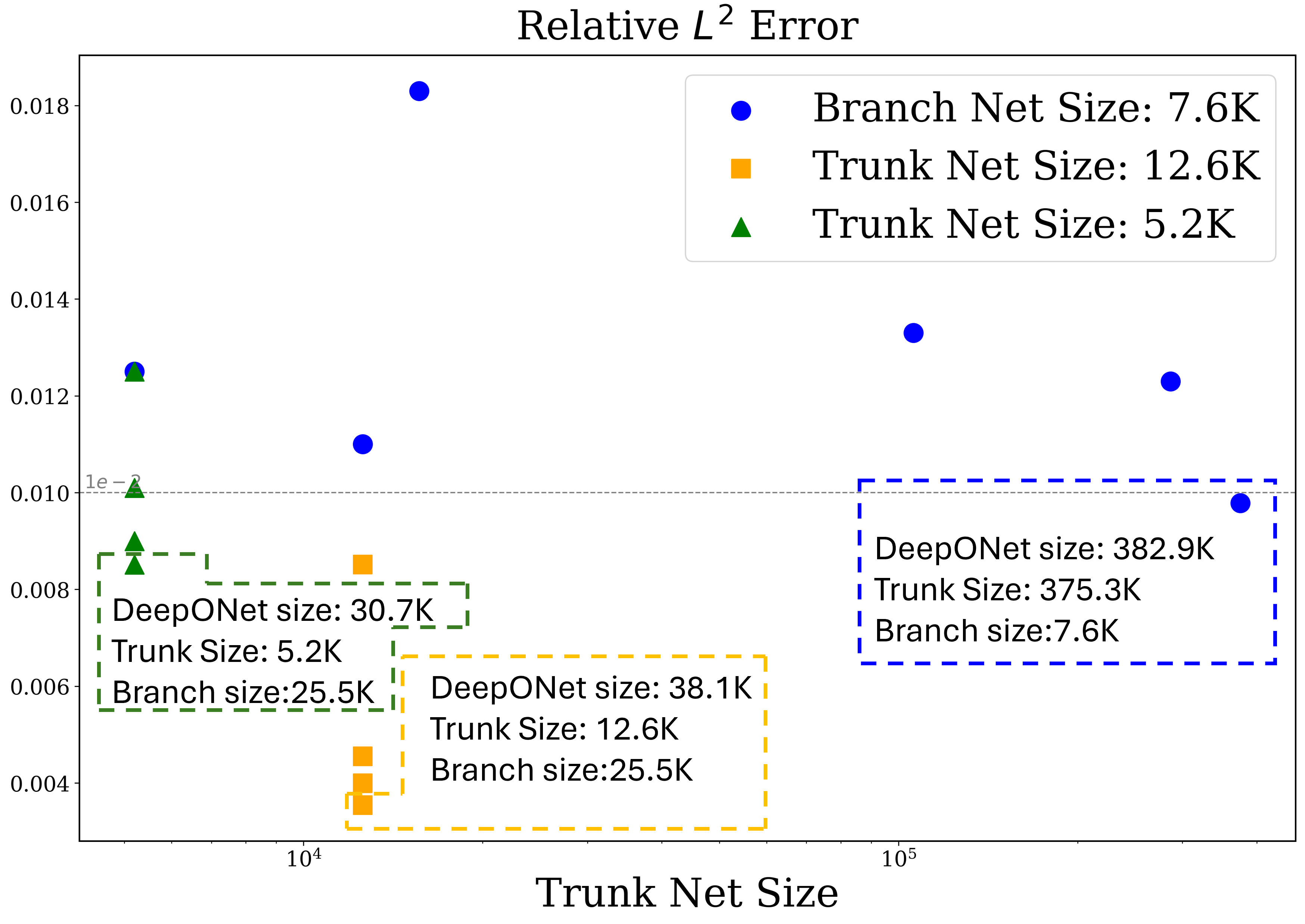}
    \caption{Effect of parameter allocation in DeepONet. Each marker denotes a trained model; the vertical axis shows test error (log scale). Blue circles: small branch (7.6K params) with a large trunk (up to 375.3K; total $\approx$382.9K). Orange squares: moderate trunk (12.6K) with a larger branch (25.5K; total $\approx$38.1K). Green triangles: small trunk (5.2K) with a larger branch (25.5K; total $\approx$30.7K). Enlarging the branch while keeping the trunk small achieves markedly lower error with a far smaller overall parameter budget, whereas increasing the trunk alone yields limited accuracy gains.}
    \label{fig:Poisson_result_1}
\end{figure}

\begin{figure}[t]
    \centering
    \includegraphics[scale=0.45]{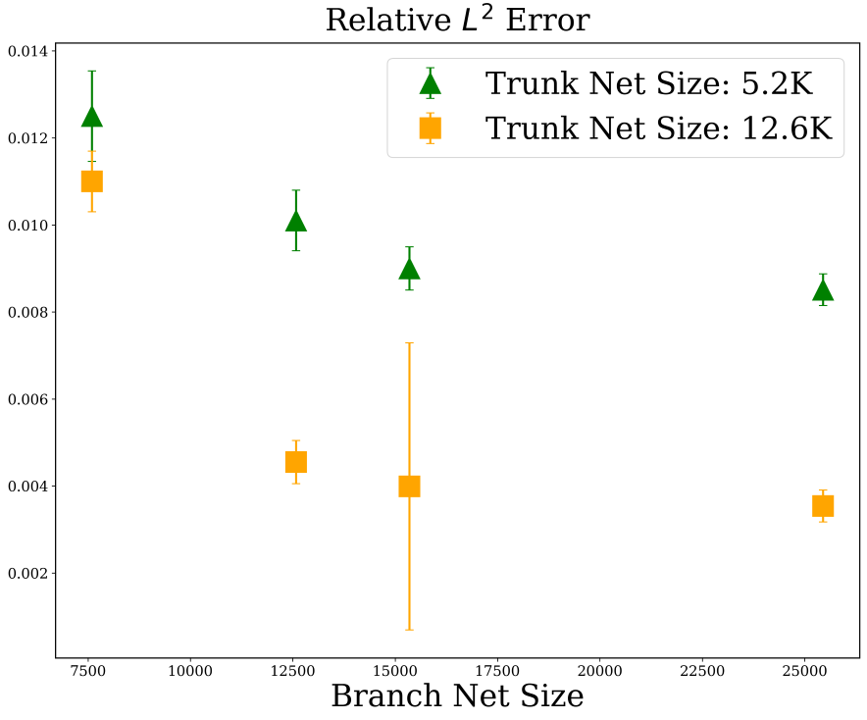}
    \caption{Test error (vertical axis) for DeepONet under two trunk capacities while varying the branch network. Green triangles: trunk = 5.2K parameters. Orange squares: trunk = 12.6K parameters. Markers denote the mean over multiple random initializations; vertical bars indicate $\pm 1$ standard deviation. Across settings, enlarging the branch yields the dominant accuracy gains, whereas increasing the trunk from 5.2K to 12.6K offers only limited additional improvement.}
    \label{fig:Poisson_result_2}
\end{figure}

}
\section{Generalization error}
Recall that the generalization error is defined as:
\begin{align}
    \mathbb{E} \left[ L_D(\vtheta_S) - L_S(\vtheta_S) \right],\notag
\end{align}
where \( \vtheta_D = \arg\min L_D(\vtheta) \) and \( \vtheta_S = \arg\min L_S(\vtheta) \). In the definition of \( L_S(\vtheta) \), there are two types of samples: one is for the sample points \( \vy \), and the other is for the input function \( f \). To bound the generalization error, we introduce \( L_M(\vtheta) \), defined as:
\begin{align}
    L_M(\vtheta) := \frac{1}{M} \sum_{i=1}^M \int_{\Omega} |\fL \fG(f_i;\vtheta)(\vy) + f_i(\vy)|^2 \, \mathrm{d} \vy.\notag
\end{align}

Thus, we can write:
\begin{align}
    &\mathbb{E} \left[ L_D(\vtheta_S) - L_S(\vtheta_S) \right] \notag\\\leq& \mathbb{E} \left[ L_D(\vtheta_S) - L_M(\vtheta_S) \right] + \mathbb{E} \left[ L_M(\vtheta_S) - L_S(\vtheta_S) \right].\notag
\end{align}
Next, we will divide the generalization error into two separate components and analyze them independently, rather than considering them together. This approach offers two key benefits. First, it allows us to clearly distinguish the two sources of error in the final results: one arising from sampling in the domain of output spaces, and the other from sampling in the input spaces. Second, it simplifies the analysis because the pairs $(f_i, \vy_j)$ are not independent and identically distributed (i.i.d.) across the entire space $\mathcal{X} \times \Omega$. Specifically, each $f_i$ appears $P$ times in the total sampling, which introduces dependencies that would complicate the analysis if both components were considered together.

Recall that \begin{align}
    \fG(s; \vtheta) := \sum_{k=1}^p \underbrace{\fB(\fD(s); \vtheta_{1,k})}_{\text{branch}} \underbrace{\fT(\vy; \vtheta_{2,k})}_{\text{trunk}},
    \notag
\end{align}we denote that $\fT(\vy; \vtheta_{2,k})$ is the network with width $W_T$ and depth $L_T$, and $\fB(\fD(s); \vtheta_{1,k})$ is the network with width $W_B$ and depth $L_B$. In order to bound $\mathbb{E} \left[ L_M(\vtheta_S) - L_S(\vtheta_S) \right]$, we require the following assumptions for the space of $\vtheta$ and $f_i \in \fX$.

\begin{assumption}\label{first lip}
    \textbf{(i) Boundedness:} There exists a functional \(\Psi(f)\) and a function \(\Phi(\vy)\) defined on \(\fX\) and \(\Omega\), respectively, such that  
\[
|\fL \fG(f; \vtheta)(\vy) + f(\vy)| \leq \Psi(f), \quad \left(\mathbb{E}_{f \sim \mu_{\fX}}[\Psi^2(f)]\right)^{\frac{1}{2}} \leq G
\]
for all \(\vy \in \Omega\). Furthermore,  
\[
|\fL \fG(f; \vtheta)(\vy) + f(\vy)| \leq \Phi(\vy), \quad \left(\mathbb{E}_{\vy \sim \operatorname{Unif}(\Omega)}[\Phi^2(\vy)]\right)^{\frac{1}{2}} \leq G
\]
for all \(f \in \fX\). We denote the set of all parameters satisfying this assumption as \(\Theta\).

    \textbf{(ii) Polynomial:} There exists a polynomial function $F(\vz)$ with degree $d_{\fL}$ such that 
    \[
    \fL u(\vy) = F\left(u, (\partial_i u)_{i=1}^d, (\partial_{ij} u)_{i,j=1}^d\right)
    \]
    for all $\vy \in \Omega$.
\end{assumption}

\begin{remark}
The first assumption is reasonable because it represents a weaker form of the condition that 
\[
|\mathcal{L} \mathcal{G}(f; \boldsymbol{\theta})(\boldsymbol{y}) + f(\boldsymbol{y})|
\] 
is bounded, given that \( \mu_{\mathcal{X}} \) is a finite measure. This assumption is justified since 
\[
|\mathcal{L} \mathcal{G}^*(f)(\boldsymbol{y}) + f(\boldsymbol{y})| = 0,
\]
and the neural networks we aim to learn are expected to approximate \( \mathcal{G}^*(f)(\boldsymbol{y}) \) closely. Consequently, for the neural network \( \mathcal{G}(f; \boldsymbol{\theta})(\boldsymbol{y}) \), it should hold that 
\[
|\mathcal{L} \mathcal{G}(f; \boldsymbol{\theta})(\boldsymbol{y}) + f(\boldsymbol{y})|
\] 
remains small. This assumption is significantly weaker than the one used in the analysis of \cite{lanthaler2022error}, where they require \( \mathcal{G}(f; \boldsymbol{\theta})(\boldsymbol{y}) \) to be Lipschitz with respect to \( \boldsymbol{\theta} \). However, the Lipschitz constant can be extremely large for deep neural networks, making their assumption much more restrictive. The second assumption is satisfied for many nonlinear PDEs, such as the operator in Allen-Cahn equation with a source term\[\fL u:=\frac{\partial u}{\partial t}-(\varepsilon^2 \Delta u + u^3 - u) \] and that in the Burgers' equation 
\[
\fL u:=\frac{\partial u}{\partial t} + u \cdot \nabla u - \nu \Delta u.
\]
\end{remark}

The approach to bounding the generalization error begins with using the Rademacher complexity. To achieve this, we extend the concept of generalization error to functional spaces. Building on the proofs in \cite{shalev2014understanding} and \cite{jiao2021error}, the lemma utilized in this paper, which relates to the Rademacher complexity, does not depend on the finite-dimensionality assumption of the input space.
\begin{definition}[{Rademacher complexity \cite{anthony1999neural}}]\label{defrad}
				Given a set of samples  $S=\{\vz_1,\vz_2,$ $\ldots,\vz_m\}$ on a domain $\fZ$, and a class $\fF$ of real-valued functions or functionals defined on $\fZ$, the empirical Rademacher complexity of $\fF$ in $S$ is defined as \[\mathbf{R}_S(\fF):=\frac{1}{m}\mathbb{E}_{\Xi_m}\left[\sup_{f\in\fF}\sum_{i=1}^m\xi_if(\vz_i)\right],\]where $\Xi_m:=\{\xi_1,\xi_2,\ldots,\xi_m\}$ is a set of $m$ independent random samples drawn from the Rademacher distribution, i.e., $\mathbb{P}(\xi_i=+1)=\mathbb{P}(\xi_i=-1)={1}/{2}$, for $i=1,2,\ldots,m.$ 
			\end{definition}

The following lemma bounds the expected value of the largest gap between the expected value of a function or functional $f$ and its empirical mean by twice the expected Rademacher complexity.
			\begin{lemma}[{\cite[Lemma 26.2]{shalev2014understanding}}]\label{connect1}
				Let $\fF$ be a set of functions or functionals defined on $\fZ$. Then \[\mathbb{E}_{S\sim \rho^m}\sup_{f\in\fF}\left(\frac{1}{m}\sum_{i=1}^mf(\vz_i)-\mathbb{E}_{\vz\sim\rho} f(\vz)\right)\le 2\mathbb{E}_{S\sim\rho^m} \mathbf{R}_S(\fF),\]where $S = \{\vz_1, \vz_2, \ldots, \vz_m\}$ is a set of $m$ independent random samples drawn from the distribution $\rho$.
			\end{lemma}
  
		Lemma below shows that composing functions from a function or functional class $\mathcal{F}$ with a Lipschitz function $w$ does
not blow up the Rademacher complexity.	
\begin{lemma}{\cite[Lemma 26.9]{shalev2014understanding}}\label{jiao}

For any $m\ge 0$, assume that $w_i: \sR \rightarrow \sR$ and the Lipschitz constant of $w_i$ is $B_i$, then for any function or functional
class $\fF$, it holds that
\begin{equation}
\frac{1}{m}\mathbb{E}_{\Xi_m}\left[\sup_{f\in\fF}\sum_{i=1}^m\xi_iw_i\circ f(\vz_i)\right]\le \frac{1}{m}\mathbb{E}_{\Xi_m}\left[\sup_{f\in\fF}\sum_{i=1}^m\xi_i B_i f(\vz_i)\right],\notag
\end{equation}
where $S = \{\vz_1, \vz_2, \ldots, \vz_m\}$ is a set of independent random samples drawn from the distribution $\mathcal{D}$.
			\end{lemma}

Based on these two lemmas, we can bound the generalization error using the Rademacher complexity of specific classes.

\begin{proposition}\label{first lemma}
    Suppose that {\color{black}Condition (i) in Assumption \ref{first lip}} holds and denote 
    \begin{align}
    \fC_{f_*}&:=\{h_2(\vy)= \fL \fG(f_*;\vtheta)(\vy)\mid \vtheta\in\Theta,\vy\in\Omega\},\notag\\\fJ_{\vy_*}&:=\{h_3(\vy_*)= \fL \fG(f;\vtheta)(\vy_*)\mid \vtheta\in\Theta,f\in\fX\}\notag
    \end{align}for $f_*\in\fX,\vy_*\in\Omega$.
    Then, we have
    \begin{align}
        \mathbb{E} \sup_{\vtheta\in\Theta}\left[ L_M(\vtheta) - L_S(\vtheta) \right] &\leq 4\mathbb{E}_{f \sim \mu_{\fX}} \left[  \Psi(f) \mathbb{E}_{\fY \sim \operatorname{Unif}(\Omega)^P} \mathbf{R}_{\fY}(\fC_{f}) \right],\notag\\\mathbb{E} \sup_{\vtheta\in\Theta}\left[ L_D(\vtheta) - L_M(\vtheta) \right] &\le 4\mathbb{E}_{\vy\sim \operatorname{Unif}(\Omega)} \left[\Phi(\vy)\mathbb{E}_{\fX^M\sim \mu_{\fX}^M}\mathbf{R}_{\fX^M} (\fJ_{\vy})\right]\notag
    \end{align}
    where $\mu_{\fX}$ is the distribution measure of the input space $\fX$.
\end{proposition}

\begin{proof}
    There are two inequalities in the proof, and their derivations are similar. Without loss of generality, we present the proof for the first inequality here. Set for any $f_i\in\fX$ and \begin{align}\fB_{f_i}&:=\{h_1(\vy)= (\fL \fG(f_i;\vtheta)(\vy) + f_i(\vy))^2\mid \vtheta\in\Theta\}\notag\\\fC_{f_i}&:=\{h_2(\vy)= \fL \fG(f_i;\vtheta)(\vy) + f_i(\vy)\mid \vtheta\in\Theta\}\notag\end{align}then we have \begin{align}
        &\mathbb{E}_{\fY\sim \operatorname{Unif}(\Omega)^P}\sup_{\vtheta\in\Theta}\left[\int_{\Omega} |\fL \fG(f_i;\vtheta)(\vy) + f_i(\vy)|^2 \, \mathrm{d} \vy-\frac{1}{P}\sum_{j=1}^P |\fL \fG(f_i;\vtheta)(\vy_j)+ f_i(\vy_j)|^2\right]\notag\\\le & 2\mathbb{E}_{\fY\sim \operatorname{Unif}(\Omega)^P}\mathbf{R}_{\fY}(\fB_{f_i})\le 4\Psi(f_i)\mathbb{E}_{\fY\sim \operatorname{Unif}(\Omega)^P}\mathbf{R}_{\fY}(\fC_{f_i}).\notag
    \end{align}

    The first inequality follows directly from Lemma \ref{connect1}. The second inequality is due to the fact that $x^2$ is Lipschitz continuous when $|x| \leq \frac{1}{2} L_*$, with a Lipschitz constant equal to $L_*$. By Assumption \ref{first lip}, the Lipschitz constant of $\phi$ is $2\Psi(f_i)$ where
\[
\fB_{f_i} := \{ \phi(h_2) \mid h_2 \in \fC_{f_i} \}.
\]
Then based on Lemma \ref{jiao}, we derive the second inequality. 

Finally, we have that 
\begin{align}
    & \mathbb{E}\sup_{\vtheta\in\Theta} \left[ L_M(\vtheta) - L_S(\vtheta) \right] \notag \\
    =& \mathbb{E}_{f \sim \mu_{\fX}} \mathbb{E}_{\fY \sim \operatorname{Unif}(\Omega)^P} \sup_{\vtheta\in\Theta}\left[ \int_{\Omega} \left| \fL \fG(f; \vtheta)(\vy) + f(\vy) \right|^2 \, \mathrm{d} \vy - \frac{1}{P} \sum_{j=1}^P \left| \fL \fG(f; \vtheta)(\vy_j) + f(\vy_j) \right|^2 \right] \notag \\
    \leq & 4  \mathbb{E}_{f \sim \mu_{\fX}}\left[\Psi(f) \mathbb{E}_{\fY \sim \operatorname{Unif}(\Omega)^P} \mathbf{R}_{\fY}(\fC_{f})\right],\notag
\end{align}
and we conclude the proof.
\end{proof}

{\color{black}To bound the Rademacher complexity, we proceed via the pseudo‑dimension~\cite{pollard1990empirical}.  Dudley’s entropy integral~\cite{anthony1999neural} expresses the Rademacher complexity in terms of the $\varepsilon$‑covering number of the hypothesis class; Lemma~\ref{cover dim} then bounds this covering number by the pseudo‑dimension.  A direct computation of the pseudo‑dimension yields an explicit bound, stated below, while all intermediate steps are worked out in the appendix.
}

\begin{definition}[pseudo-dimension \cite{pollard1990empirical}]\label{Pse}
		Let $\fF$ be a class of functions from $\fH$ to $\sR$. The pseudo-dimension of $\fF$, denoted by $\text{Pdim}(\fF)$, is the largest integer $m$ for$\footnote{For notational simplicity we reuse the symbol $m$ in this section and in the proof of Proposition~\ref{boundvc}. Here $m$ denotes the sample size, whereas in the approximation‑error analysis (e.g., Lemma~\ref{Lip}) it refers to $\dim(\fD)$.}$ which there exists $(x_1,x_2,\ldots,x_m,y_1,y_2,\ldots,y_m)\in\fH^m\times \sR^m$ such that for any $(b_1,\ldots,b_m)\in\{0,1\}^m$ there is $g\in\fF$ such that $\forall i: g\left(x_i\right)>y_i \Longleftrightarrow b_i=1.$
	\end{definition}

     \begin{proposition}\label{boundvc}
           Suppose that {\color{black}Condition (ii) in Assumption~\ref{first lip}} holds. Then
\begin{align}
    \max\{\text{Pdim}(\fC_f),\text{Pdim}(\fJ_{\vy})\} \leq C p \big( L_T^3 W_T^2 + L_B^2 W_B^2 \big) \log_2(L_T L_B W_B W_T)\notag
\end{align} for  all $f\in\fX$ and $\vy\in\Omega$, 
where \( C \) is a constant that depends logarithmically on \( d_\fL \), as defined in {\color{black} Condition (ii) in Assumption~\ref{first lip}}.
      \end{proposition}

Now we can prove our main theorem in this section:
\begin{theorem}\label{genn}
    Suppose that Assumption \ref{first lip} holds. Then, we have
    \begin{align}
        &\mathbb{E} \left[ L_D(\boldsymbol{\theta}_S) - L_S(\boldsymbol{\theta}_S) \right]\notag\\ \leq& C G^2 \left( p \left( L_T^3 W_T^2 + L_B^2 W_B^2 \right) \log_2(L_T L_B W_B W_T) \right)^{\frac{1}{2}} \left( \frac{\log P}{\sqrt{P}} + \frac{\log M}{\sqrt{M}} \right),\notag
    \end{align}
    where $G$ is defined in {\color{black} Condition (i) of Assumption \ref{first lip}}, and $C$ is independent of $M, P, W_B,$ $ W_T, L_B, L_T$, but depends on $d_{\mathcal{L}}$ as specified in Assumption \ref{first lip}.
\end{theorem}

{\color{black}Although the present bound is informative, it is not optimal with respect to the choice of sampling points.  Achieving the nearly optimal generalization error would require controlling higher‑order terms in the generalization gap, as in the analysis of \citet{gyorfi2002distribution}.  We leave this refinement to future work.}

\section{Conclusion}
In this paper, we estimated the generalization error of DeepONet in physics-informed training, demonstrating that the deep structure of DeepONet offers significant benefits at the approximation level, particularly in the branch network. Additionally, we showed that the classical DeepONet \cite{lu2021deep} retains strong approximation capabilities in physics-informed training. This means it can approximate not only the solution itself but also the derivatives of the solution when applied to solving PDEs. Furthermore, we derived bounds on the pseudo-dimension of DeepONet, including its derivatives.

This work addresses a critical gap by providing {\color{black}error estimates} applicable to a wide range of physics-informed machine learning models and applications. However, several open questions remain for future research. First, while our results indicate that a deep structure enhances the approximation rate of DeepONet, training deep networks can be challenging. This raises the unresolved question of how to balance the trade-off between deep and shallow neural networks. In \cite{yangdeeper}, this problem was explored for function learning, but its solution for operator learning remains unexplored. Second, our results show that the branch network requires a more complex structure than the trunk network in most cases. Therefore, future work should focus on designing sampling strategies \cite{luo2025imbalanced} and training methods \cite{chen2025automatic,chen2024quantifying,chen2025learn} that place greater emphasis on the branch network.

Additionally, this paper focuses solely on DeepONet, leaving open the question of whether similar results can be extended to other network architectures. {\color{black}For example, consider the class of {shallow neural operators} with \(p\) neurons mapping between Banach spaces \(\mathcal{X}\) and \(\mathcal{Y}\):
\[
\vO(a;\vartheta)
   \;=\;
   \sum_{i=1}^{p} \mathcal{A}_{i}\,
   \sigma\!\bigl(\mathcal{W}_{i}a + \mathcal{B}_{i}\bigr),
   \qquad a \in \mathcal{X},
\]
where
\(\mathcal{W}_{i} \in \mathcal{L}(\mathcal{X},\mathcal{Y})\),
\(\mathcal{B}_{i} \in \mathcal{Y}\),
\(\mathcal{A}_{i} \in \mathcal{L}(\mathcal{Y},\mathcal{Y})\),
and
\(\vartheta = \{\mathcal{W}_{i},\mathcal{A}_{i},\mathcal{B}_{i}\}_{i=1}^{p}\)
collects all parameters.
Here \(\mathcal{L}(\mathcal{X},\mathcal{Y})\) denotes the space of bounded linear operators from \(\mathcal{X}\) to \(\mathcal{Y}\), and
\(\sigma : \mathbb{R} \to \mathbb{R}\) is a point‑wise activation. Constructing the lifting operators \(\mathcal{A}_{i}\) in a way that reflects the regularity of the input and output spaces is non‑trivial for differential PDEs, and identifying conditions on the weight operators \(\mathcal{W}_{i}\) that yield provably {good approximation rates}—rather than merely universal approximation (as shown in \cite{he2023mgno})—remains an open problem.  Furthermore, although pseudo‑dimension techniques can still be used to bound the generalization error, it remains an open question which subset of the network parameters is most critical in determining that error.} Addressing these challenges will further deepen our understanding of operator learning and its practical applications.

\section*{Acknowledgement}
We would like to express our sincere gratitude to Dr.~Chuqi Chen for assistance with the experimental part and for valuable discussions that greatly improved the quality of this paper. We also thank the two anonymous reviewers for their constructive suggestions, which further enhanced the clarity and quality of the work.

\appendix
\section{Proof of Approximation Error}
\subsection{Proof of Proposition \ref{operator}}
\subsubsection{Preliminaries}
In the part, we collect the lemmas and definition that we need to use Proposition \ref{operator}. First of all, we will collect the lemmas and definition relative to Bramble--Hilbert Lemma.
		\begin{definition}[Sobolev semi-norm \cite{evans2022partial}]
			Let $n\in\sN_+$ and $1\le p\le \infty$. Then we define Sobolev semi-norm $|f|_{W^{n, p}(\Omega)}:=\left(\sum_{|\alpha|= n}\left\|D^{\valpha} f\right\|_{L^p(\Omega)}^p\right)^{1 / p}$, if $p<\infty$, and $|f|_{W^{n, \infty}(\Omega)}:=\max_{|\alpha| = n}\left\|D^{\valpha} f\right\|_{L^\infty(\Omega)}$. Furthermore, for $\vf\in W^{1,\infty}(\Omega,\sR^d)$, we define  $|\vf|_{W^{1,\infty}(\Omega,\sR^d)}:=\max_{i=1,\ldots,d}\{|f_i|_{W^{1,\infty}(\Omega)}\}$.
		\end{definition}
\begin{lemma}\label{average coe}
			Let $n\ge 1$ and $f\in W^{n,\infty}([0,1]^d)$, $\vx_0\in\Omega$ and $r>0$ such that for the ball $B:=B_{r,|\cdot|}(\vx_0)$ which is a compact subset of $ [0,1]^d$. The corresponding Taylor
			polynomial of order $n$ of $f$ averaged over $B$ can be read as \[Q^nf(\vx)=\sum_{|\valpha|\le n-1}c_{f,\valpha}\vx^{\valpha}.\]
			Furthermore,  \[c_{f,\valpha}=\sum_{|\valpha+\vbeta|\le n-1}\frac{1}{(\vbeta+\valpha)!}a_{\vbeta+\valpha}\int_{B}	D^{\valpha+\vbeta}{f}(\vx)\vy^{\vbeta}b_r(\vy)\,\mathrm{d} \vy\] with $a_{\vbeta+\valpha}\le \frac{(\valpha+\vbeta)!}{\valpha!\vbeta!}$, $b_r$ is a smooth function and \begin{align}\left|c_{f,\valpha}\right|\le C_2(n,d)\|{f}\|_{W^{n-1,\infty}(B)}.\notag
			\end{align} where $C_2(n,d)=\sum_{|\valpha+\vbeta|\le n-1}\frac{1}{\valpha!\vbeta!}$.
		\end{lemma}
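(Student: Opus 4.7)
The plan is to start from the standard definition of the averaged Taylor polynomial used in Brenner--Scott, namely
\[ Q^n f(\vx) = \int_B T^n_{\vy} f(\vx)\, b_r(\vy)\,\D\vy, \]
where $T^n_{\vy}f(\vx) = \sum_{|\valpha|\le n-1}\frac{1}{\valpha!}D^{\valpha}f(\vy)(\vx-\vy)^{\valpha}$ is the order $n-1$ Taylor polynomial of $f$ at $\vy$, and $b_r$ is a fixed non-negative $C^\infty$ bump supported in $B$ normalized so that $\int_B b_r\,\D\vy=1$; its smoothness and unit mass are the only two properties we will actually need. From this starting point the lemma reduces to an explicit calculation of the coefficients of $Q^n f$ viewed as a polynomial in $\vx$.

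First I would expand the mixed power $(\vx-\vy)^{\valpha}$ by the multi-index binomial theorem, $(\vx-\vy)^{\valpha} = \sum_{\vbeta\le\valpha}\binom{\valpha}{\vbeta}\vx^{\valpha-\vbeta}(-\vy)^{\vbeta}$. Substituting this into $T^n_{\vy}f$ and reindexing via $\vgamma := \valpha-\vbeta$ (so the constraint $|\valpha|\le n-1$ becomes $|\vgamma+\vbeta|\le n-1$ and the factorials rearrange as $\frac{1}{\valpha!}\binom{\valpha}{\vbeta} = \frac{1}{(\vgamma+\vbeta)!}\binom{\vgamma+\vbeta}{\vbeta}$), the Taylor polynomial takes the form
\[ T^n_{\vy}f(\vx) = \sum_{|\vgamma|\le n-1}\vx^{\vgamma}\sum_{|\vgamma+\vbeta|\le n-1}\frac{(-1)^{|\vbeta|}}{(\vgamma+\vbeta)!}\binom{\vgamma+\vbeta}{\vbeta}D^{\vgamma+\vbeta}f(\vy)\,\vy^{\vbeta}. \]
Integrating this against $b_r(\vy)$ over $B$ and swapping the sum with the integral then isolates $c_{f,\vgamma}$ in exactly the stated form, with $a_{\vgamma+\vbeta} = (-1)^{|\vbeta|}\binom{\vgamma+\vbeta}{\vbeta}$; in particular $|a_{\vgamma+\vbeta}|\le \binom{\vgamma+\vbeta}{\vbeta} = \frac{(\vgamma+\vbeta)!}{\vgamma!\vbeta!}$, matching the bound on $a$ claimed by the lemma.

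For the final estimate on $|c_{f,\valpha}|$, I would bound each integrand by pulling $\|D^{\valpha+\vbeta}f\|_{L^\infty(B)}$ out of the integral, using $|\vy^{\vbeta}|\le 1$ since $B\subset[0,1]^d$, and using $\int_B b_r\,\D\vy = 1$. After cancelling $\binom{\valpha+\vbeta}{\vbeta}/(\valpha+\vbeta)!$ against $1/(\valpha!\vbeta!)$, each summand collapses to $\frac{1}{\valpha!\vbeta!}\|D^{\valpha+\vbeta}f\|_{L^\infty(B)}$, and bounding every derivative norm by $\|f\|_{W^{n-1,\infty}(B)}$ yields exactly $C_2(n,d)\,\|f\|_{W^{n-1,\infty}(B)}$ with $C_2(n,d)=\sum_{|\valpha+\vbeta|\le n-1}\frac{1}{\valpha!\vbeta!}$.

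No step is really the main obstacle; the whole proof is multi-index bookkeeping. The only place to be careful is the reindexing $\valpha\mapsto \vgamma+\vbeta$: one must check that the coupled constraint $|\vgamma+\vbeta|\le n-1$ (not the two separate constraints $|\vgamma|,|\vbeta|\le n-1$) is preserved, and that the factorial combinations collapse correctly so that $|a_{\valpha+\vbeta}|$ emerges as the pure binomial coefficient $(\valpha+\vbeta)!/(\valpha!\vbeta!)$ stated in the lemma.
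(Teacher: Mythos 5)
Your proof is correct, and in fact the paper provides no proof of this lemma at all --- it is stated in the appendix preliminaries (before the proof of Proposition~\ref{operator}) as a standard fact about the averaged Taylor polynomial from Brenner--Scott, with no derivation. You have reconstructed the usual one. Starting from $Q^nf(\vx)=\int_B T^n_{\vy}f(\vx)\,b_r(\vy)\,\D\vy$, expanding $(\vx-\vy)^{\valpha}$ by the multi-index binomial theorem, reindexing $\valpha=\vgamma+\vbeta$, and reading off $a_{\valpha+\vbeta}=(-1)^{|\vbeta|}\binom{\valpha+\vbeta}{\vbeta}$ is exactly how the stated formula arises, and the bookkeeping you flag as the delicate point (the coupled constraint $|\vgamma+\vbeta|\le n-1$ and the collapse $\frac{1}{\valpha!}\binom{\valpha}{\vbeta}=\frac{1}{\vgamma!\vbeta!}$) is handled correctly. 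The final estimate, using $|\vy^{\vbeta}|\le 1$ on $B\subset[0,1]^d$ and $\int_B b_r=1$, reproduces $C_2(n,d)=\sum_{|\valpha+\vbeta|\le n-1}\frac{1}{\valpha!\vbeta!}$ exactly. Two minor notational points worth being aware of: the paper writes $a_{\vbeta+\valpha}$ as if it depended only on the sum $\valpha+\vbeta$, but as your computation makes explicit it actually depends on $\valpha$ and $\vbeta$ separately through $\binom{\valpha+\vbeta}{\vbeta}$ --- the stated bound survives this sloppiness; and the integrand in the paper's display, written $D^{\valpha+\vbeta}f(\vx)$, should of course read $D^{\valpha+\vbeta}f(\vy)$ as you have it, since the integration variable is $\vy$.
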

		
		\begin{definition}
			Let $\Omega,~B\in\sR^d$. Then $\Omega$ is called stared-shaped with respect to $B$ if \[\overline{\text{conv}}\left(\{\vx\}\cup B\subset \Omega\right),~\text{for all }\vx\in\Omega.\]
		\end{definition}
		
		\begin{definition}
			Let $\Omega\in\sR^d$ be bounded, and define \[\fR:=\left\{r>0: \begin{array}{l}
				\text { there exists } \vx_0 \in \Omega \text { such that } \Omega \text { is } \\
				\text { star-shaped with respect to } B_{r,|\cdot|}\left(\vx_0\right)
			\end{array}\right\} .\]Then we define\[r_{\max }^{\star}:=\sup \fR \quad \text { and call } \quad \gamma:=\frac{\operatorname{diam}(\Omega)}{r_{\max }^{\star}}\]the chunkiness parameter of $\Omega$ if $\fR\not=\emptyset$.
		\end{definition}
		
		\begin{lemma}[\cite{brenner2008mathematical}]\label{BH}
			Let $\Omega\in\sR^d$ be open and bounded, $\vx_0\in\Omega$ and $r>0$ such that $\Omega$ is the stared-shaped with respect to $B:=B_{r,|\cdot|}\left(\vx_0\right)$, and $r\ge \frac{1}{2}r_{\max }^{\star}$. Moreover, let $n\in\sN_+$, $1\le p\le \infty$ and denote by $\gamma$ by the chunkiness parameter of $\Omega$. Then there is a constant $C(n,d,\gamma)>0$ such that for all $f\in W^{n,p}(\Omega)$\[\left|f-Q^n f\right|_{W^{k, p}(\Omega)} \le C(n,d,\gamma) h^{n-k}|f|_{W^{n, p}(\Omega)} \quad \text { for } k=0,1, \ldots, n\]where $Q^n f$ denotes the Taylor polynomial of order $n$ of $f$ averaged over $B$ and $h=\operatorname{diam}(\Omega)$.
		\end{lemma}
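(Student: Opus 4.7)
The plan is to prove the Bramble-Hilbert inequality in two stages: first use a commutation property of the averaged Taylor polynomial $Q^n$ to reduce the $k$-th order seminorm estimate to a pure $L^p$ estimate, then establish the $L^p$ case via an integral Taylor-remainder representation together with a Riesz-potential bound that exploits the star-shaped condition.

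First, I would verify the commutation identity $D^{\vbeta}(Q^n f) = Q^{n-|\vbeta|}(D^{\vbeta} f)$ for every multi-index $\vbeta$ with $|\vbeta| \le n-1$. Using the explicit formula from Lemma \ref{average coe}, $Q^n f$ is a polynomial in $\vx$ whose coefficients are integrals of $D^{\valpha+\vbeta} f$ against polynomial-weighted versions of the kernel $b_r$; differentiating $Q^n f$ in $\vx$ shifts the multi-index and produces exactly the formula for $Q^{n-|\vbeta|}$ applied to $D^{\vbeta} f$. Consequently $D^{\vbeta}(f - Q^n f) = D^{\vbeta} f - Q^{n-|\vbeta|}(D^{\vbeta} f)$, and taking $|\vbeta|=k$ reduces the claim to the pure $L^p$ bound $\|g - Q^m g\|_{L^p(\Omega)} \le C(m,d,\gamma)\, h^m\, |g|_{W^{m,p}(\Omega)}$ with $m = n-k$ and $g = D^{\vbeta} f$.

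Next, I would establish this $L^p$ estimate via Taylor's formula with integral remainder. For fixed $\vy \in B$, the star-shaped hypothesis guarantees $\vy + t(\vx-\vy) \in \Omega$ for all $\vx \in \Omega$, $t \in [0,1]$, so
$$g(\vx) - T^m_{\vy} g(\vx) = m \sum_{|\valpha|=m} \frac{(\vx-\vy)^{\valpha}}{\valpha!} \int_0^1 (1-t)^{m-1} D^{\valpha} g(\vy + t(\vx-\vy))\, \D t.$$
Averaging against $b_r(\vy)$ over $B$ and using $|\vx - \vy| \le h$ yields the pointwise bound
$$|g(\vx) - Q^m g(\vx)| \lesssim h^m \sum_{|\valpha|=m} \int_B b_r(\vy) \int_0^1 (1-t)^{m-1} |D^{\valpha} g(\vy + t(\vx-\vy))|\, \D t\, \D \vy.$$
Changing variables $\vz = \vy + t(\vx-\vy)$ and integrating out $t$ bounds the right-hand side by a Riesz-type convolution $h^m \|b_r\|_{L^\infty} \int_\Omega |D^{\valpha} g(\vz)|\, |\vx - \vz|^{m-d}\, \D \vz$, whose $L^p(\Omega)$ norm is controlled by $C(d,m)\, h^m\, |g|_{W^{m,p}(\Omega)}$ via a Young-type inequality on a bounded domain for every $p \in [1,\infty]$. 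The chunkiness dependence enters through $\|b_r\|_{L^\infty}\cdot|B| \lesssim 1$ together with $r \ge r_{\max}^\star/2 \ge h/(2\gamma)$, so the final constant depends only on $n, d, \gamma$.

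The main obstacle I expect is obtaining the $L^p$ bound uniformly in $p \in [1,\infty]$. A naive change of variables produces a Jacobian factor $t^{-d}$ that is not integrable near $t=0$ when $d/p \ge 1$; the standard workaround is to rewrite the double integral as a single convolution against the locally integrable kernel $|\vz|^{m-d}$ before applying Young's inequality on the bounded star-shaped domain. Once this Riesz-potential estimate is established, assembling the higher-order case through the commutation identity of Step 1 and tracking the $\gamma$-dependence in the averaging constants is bookkeeping.
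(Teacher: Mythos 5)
The paper does not prove this lemma; it is quoted directly from Brenner and Scott's textbook, so there is no internal argument to compare against. Your reconstruction is the standard textbook proof: the commutation $D^{\vbeta} Q^n f = Q^{n-|\vbeta|} D^{\vbeta} f$ reduces everything to the $k=0$ case, and the $L^p$ bound follows from the integral-remainder form of Taylor's theorem, a change of variables producing a Riesz-potential kernel $|\vx-\vz|^{m-d}$, and Young's convolution inequality on a bounded domain. The high-level structure is sound and correctly locates where the star-shape hypothesis and the chunkiness parameter enter.

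Two corrections to the bookkeeping. First, for $\vz = \vy + t(\vx-\vy)$ with $\vy$ the integration variable, the Jacobian is $(1-t)^{-d}$, singular near $t=1$, not $t^{-d}$ near $t=0$; the singularity is tamed by the remainder factor $(1-t)^{m-1}$ together with the support restriction $\vy \in B$, which forces $1-t \gtrsim |\vx-\vz|/h$ and is precisely what generates the $|\vx-\vz|^{m-d}$ decay. Second, pulling $|\vx-\vy|^m \le h^m$ out before the change of variables is legitimate but then the lower cutoff on $1-t$ contributes a factor $h^{d-m}|\vx-\vz|^{m-d}$, and combined with $\|b_r\|_{L^\infty}\sim r^{-d}$ and $\||\cdot|^{m-d}\|_{L^1(\Omega-\Omega)}\sim h^m$ one recovers the scaling $h^m (h/r)^d \sim \gamma^d h^m$; this balancing is where the $\gamma$-dependence actually arises, not merely through the inequality $r \ge h/(2\gamma)$ applied to $\|b_r\|_{L^\infty}|B|\lesssim 1$ as you wrote. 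Also, the kernel is $|\vx-\vz|^{m-d}$ only for $m<d$; for $m\ge d$ it is bounded (logarithmic at $m=d$), which is harmless for Young's inequality but should be stated.
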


  Lastly, we list a few basic lemmas of $\sigma_2$ neural networks repeatedly applied in our main analysis.
			\begin{lemma}[\cite{yang2023nearly}]\label{sigma2}
				The following basic lemmas of $\sigma_2$ neural networks hold:
				
				(i) $f(x, y)=x y=\frac{(x+y)^2-(x-y)^2}{4}$ can be realized exactly by a $\sigma_2$ neural network with one hidden layer and four neurons.
				
				(ii) Assume $\vx^{\valpha}=x_1^{\alpha_1} x_2^{\alpha_2} \cdots x_d^{\alpha_d}$ for $\valpha \in \sN^d$. For any $N, L \in \sN^{+}$ such that $N L+2^{\left\lfloor\log _2 N\right\rfloor} \geq$ $|\valpha|$, there exists a $\sigma_2$ neural network $\phi(\vx)$ with the width $4 N+2 d$ and depth $L+\left\lceil\log _2 N\right\rceil$ such that \[\phi(\vx)=\vx^{\valpha}\] for any $\vx \in \sR^d$.
			\end{lemma}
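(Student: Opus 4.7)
My plan is to handle the two parts in order: part (i) provides an exact single-layer ``multiplication gadget'', and part (ii) builds on it with a carefully scheduled composition of many such gadgets across width and depth. The key elementary identity powering both constructions is $t^2 = \sigma_2(t) + \sigma_2(-t)$ for every $t \in \sR$, a consequence of $\sigma_2(s) = \max\{0,s\}^2$ vanishing whenever $s \leq 0$ and equalling $s^2$ otherwise.

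For part (i), combining the polarization identity $xy = \tfrac{1}{4}[(x+y)^2 - (x-y)^2]$ with the square decomposition above gives
\[
xy = \tfrac{1}{4}\bigl[\sigma_2(x+y) + \sigma_2(-x-y) - \sigma_2(x-y) - \sigma_2(-x+y)\bigr],
\]
an identity valid globally on $\sR^2$. This is realized exactly by a $\sigma_2$-NN with one hidden layer of four neurons, input weights $(\pm 1, \pm 1)$ and zero biases, and output weights $(\tfrac{1}{4}, \tfrac{1}{4}, -\tfrac{1}{4}, -\tfrac{1}{4})$.

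For part (ii), I would build a two-phase $\sigma_2$-NN of width $4N + 2d$ and depth $L + \lceil \log_2 N \rceil$. At every hidden layer I reserve $2d$ neurons to maintain a faithful representation of the inputs $x_1, \ldots, x_d$ (carrying each coordinate forward through paired $\sigma_2$ units whose affine combination reconstructs $x_i$), while the remaining $4N$ neurons host $N$ parallel copies of the multiplication gadget from part~(i). Phase one consists of $L$ hidden layers in which $N$ independent accumulators are each multiplied by one fresh variable factor per layer, producing $N$ partial monomials whose exponents sum to at most $NL$. Phase two consists of $\lceil \log_2 N \rceil$ hidden layers implementing a balanced binary multiplication tree that combines the $N$ accumulators into a single scalar; at the leaves of this tree up to $2^{\lfloor \log_2 N \rfloor}$ additional variable factors can be inserted. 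The total factor capacity is therefore $NL + 2^{\lfloor \log_2 N \rfloor}$, and the hypothesis on $N,L,|\valpha|$ precisely asserts that this capacity covers $|\valpha|$.

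The main obstacle I anticipate is the scheduling step: for a given exponent vector $\valpha$, I must exhibit an explicit assignment of the $|\valpha|$ variable copies to accumulator/depth slots and tree leaves so that the final product equals $x_1^{\alpha_1} \cdots x_d^{\alpha_d}$ exactly, rather than some other monomial of the same total degree. A greedy ``fill accumulators first, then spill into the tree'' strategy should suffice, but verifying that it succeeds whenever $|\valpha| \leq NL + 2^{\lfloor \log_2 N \rfloor}$ requires a short combinatorial argument, which is the real combinatorial heart of the construction. A secondary technical point concerns the identity-channel subnetwork: because $\sigma_2$ is genuinely nonlinear, reconstructing $x_i$ from $\sigma_2$ outputs requires a careful bias choice, and the $+2d$ slack in the width budget is exactly what pays for this bookkeeping at every layer.
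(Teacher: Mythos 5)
The paper does not prove this lemma; it simply imports it by citation from the reference \cite{hon2022simultaneous}, so there is no in-paper proof to compare your argument against. Evaluating your proposal on its own terms: part (i) is correct and complete — the polarization identity together with $t^2 = \sigma_2(t) + \sigma_2(-t)$ realizes $xy$ exactly with four $\sigma_2$ neurons and one hidden layer, just as you wrote.

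Part (ii) has a genuine gap in the ``identity channel,'' and it is precisely the point you flag at the end as a ``secondary technical point'' to be handled by ``a careful bias choice.'' No choice of biases lets a $\sigma_2$ subnetwork output the identity $x \mapsto x$ exactly on all of $\sR$: the output of an affine map applied to $\sigma_2$ activations is a $C^1$ piecewise-polynomial whose second derivative jumps by $2 c_i a_i^2$ at each breakpoint, so it can be globally affine only if every nonconstant neuron has zero output weight, in which case the network is a constant. The ``large bias'' trick $x \approx \tfrac{1}{4b}[\sigma_2(x+b) - \sigma_2(x-b)]$ is exact only on $[b, \infty)$ and fails in the opposite tail, so it cannot be used to carry $x_i$ forward exactly when the lemma demands $\phi(\vx) = \vx^{\valpha}$ for \emph{every} $\vx \in \sR^d$. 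The resolution is hidden in the paper's notation: a ``$\sigma_2$-NN'' is defined to allow activations $\sigma_t$ with $t \le 2$, i.e.\ both $\sigma_1 = \mathrm{ReLU}$ and $\sigma_2 = \mathrm{ReLU}^2$ may appear. The $2d$ bookkeeping neurons should therefore be $\sigma_1$ units, for which $x_i = \sigma_1(x_i) - \sigma_1(-x_i)$ is an exact global identity with two units per coordinate. With that substitution, the rest of your two-phase construction (the $L$-layer accumulation phase plus the $\lceil \log_2 N\rceil$-layer balanced multiplication tree, giving a factor budget of $NL + 2^{\lfloor \log_2 N\rfloor}$) is a reasonable skeleton, though you would still owe the reader the explicit greedy scheduling argument you mention in order to make the proof complete.
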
	

               \subsubsection{Partition of Unity}
We are going to divide the domain \( [0,1]^d \) into several parts and approximate \( v \) locally. First, we define the following:

\begin{definition}
\label{omega}
Given \( K, d \in \mathbb{N}^+ \), and for any \( \vm = (m_1, m_2, \ldots, m_d) \in [K+1]^d \), we define
\[
\Omega_{\vm} := \prod_{j=1}^d \Omega_{m_j},
\]
where
\[
\Omega_m := \left[\frac{m-1}{K} - \frac{1}{5K}, \frac{m}{K}\right] \cap [0,1].
\]
\end{definition}

Next, we define the partition of unity based on \( \Omega \):

\begin{definition}
Define \( s(x): \mathbb{R} \to [0,1] \) as follows:
\begin{equation}
s(x) :=
\begin{cases}
  2x^{2},                           & x \in \bigl[0, \tfrac12\bigr], \\[4pt]
  -2\bigl(x-1\bigr)^{2} + 1,        & x \in \bigl[\tfrac12, 1\bigr], \\[4pt]
  1,                                & x \in [1, 5], \\[4pt]
  -2\bigl(x-5\bigr)^{2} + 1,        & x \in \bigl[5, \tfrac{11}{2}\bigr], \\[4pt]
  2\bigl(x-6\bigr)^{2},             & x \in \bigl[\tfrac{11}{2}, 6\bigr], \\[4pt]
  0,                                & \text{otherwise}.
\end{cases}\notag
\end{equation}

\end{definition}
\begin{definition}\label{sm}
				Given $K\in\sN_+$, then we define two functions in $\sR$:\begin{align}
					s_m(x)=s\left(5Kx+6-5m\right).\notag
				\end{align}
				
				Then for any $\vm=(m_1,m_2,\ldots,m_d)\in[K+1]^d$, we define \begin{equation}
					s_{\vm}(\vx):=\prod_{j=1}^d s_{m_j}(x_j)\notag
				\end{equation} for any $\vx=(x_1,x_2,\ldots,x_d)\in\sR^d$.
			\end{definition}
			
			\begin{proposition}\label{smm}
				Given $K,d\in\sN_+$, $\{s_{\vm}(\vx)\}_{\vm\in[K+1]^d}$ defined in Definition \ref{sm} satisfies:
				
				(i): $\|s_{\vm}(\vx)\|_{L^\infty([0,1]^d)}\le 1$,~$\|s_{\vm}(\vx)\|_{W^{1,\infty}([0,1]^d)}\le 5K$ and $\|s_{\vm}(\vx)\|_{W^{2,\infty}([0,1]^d)}\le 25K^2$ for any $\vm\in[K+1]^d$.
				
				(ii): $\{s_{\vm}(\vx)\}_{\vm\in[K+1]^d}$ is a partition of the unity $[0,1]^d$ with ${\rm supp}~s_{\vm}(\vx)\cap[0,1]^d=\Omega_{\vm}$ defined in Definition \ref{omega}.
			\end{proposition}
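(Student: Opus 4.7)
The plan is to establish the norm bounds in (i) by chain rule and Leibniz's rule, and to establish (ii) by first reading off the support from the preimage of $(0,6)$ under the affine map $4Kx+5-4m$, then reducing the partition-of-unity claim to a one-dimensional identity via the tensor product structure, and finally verifying that identity on the narrow transition windows by direct substitution of the piecewise quadratic formulas.

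For (i), I would first record the piecewise structure of the generating bump $s$. Each of the five nontrivial pieces is quadratic or constant, with $|s|\le 1$, a linear derivative of slope $\pm 4$ on the four non-plateau pieces and $s'\equiv 0$ on the plateau, and a piecewise constant second derivative $s''\in\{-4,0,4\}$. Continuity of $s$ and of $s'$ at the gluing points $\{1/2,1,5,11/2\}$ must be checked so that $s\in W^{2,\infty}(\sR)$ with $\|s\|_{L^\infty}=1$, $\|s'\|_{L^\infty}=2$, and $\|s''\|_{L^\infty}=4$. Applying the chain rule to $s_m(x)=s(4Kx+5-4m)$ gives $\|s_m\|_{L^\infty}\le 1$, $\|s_m'\|_{L^\infty}\le 4K\cdot 2=8K$, and $\|s_m''\|_{L^\infty}\le(4K)^2\cdot 4=64K^2$. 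For $s_{\vm}(\vx)=\prod_{j=1}^d s_{m_j}(x_j)$, Leibniz's rule yields $\partial_i s_{\vm}=s_{m_i}'(x_i)\prod_{j\ne i}s_{m_j}(x_j)$, $\partial_i^2 s_{\vm}=s_{m_i}''(x_i)\prod_{j\ne i}s_{m_j}(x_j)$, and $\partial_i\partial_l s_{\vm}=s_{m_i}'(x_i)s_{m_l}'(x_l)\prod_{j\ne i,l}s_{m_j}(x_j)$ for $i\ne l$; bounding each untouched factor by $1$ gives the claimed $L^\infty$, $W^{1,\infty}$ and $W^{2,\infty}$ estimates.

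For the support part of (ii), one notes that $s_m(x)\ne 0$ iff $4Kx+5-4m\in(0,6)$, which rearranges to $x\in\bigl((m-1)/K-1/(4K),\,m/K+1/(4K)\bigr)$; intersecting with $[0,1]$ is exactly $\Omega_m$ from Definition \ref{omega}, and the tensor product gives the $d$-dimensional support statement. For the partition-of-unity identity, the tensor structure
\[
\sum_{\vm\in[K]^d}s_{\vm}(\vx)=\prod_{j=1}^d\Bigl(\sum_{m=1}^K s_m(x_j)\Bigr)
\]
reduces the claim to the one-dimensional identity $\sum_{m=1}^K s_m(x)=1$ on $[0,1]$. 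To prove it I would split $[0,1]$ into the plateau intervals on which a single $s_m$ equals $1$ and all others vanish by the support computation, and the narrow transition windows on which exactly two consecutive bumps $s_m$ and $s_{m+1}$ are nonzero, and then substitute the piecewise quadratic formulas for the falling part of $s_m$ and the rising part of $s_{m+1}$ to see that the squared terms cancel and leave $1$.

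The main obstacle is this last algebraic identity on the transition windows. It is not a generic estimate but a design feature of $s$: the rising quadratic piece and the falling quadratic piece must be exact complements under the shift in the definition of $s_m$, so that when one substitutes and expands, the $(4Kx+\text{const})^2$ terms in $s_m$ and $s_{m+1}$ annihilate. Once this cancellation is verified, parts (i) and (ii) are completed by the routine chain-rule/Leibniz computations and tensor product argument outlined above, and Proposition \ref{smm} then supplies the smooth partition of unity required to glue the local Bramble--Hilbert polynomial approximants in the proof of Proposition \ref{operator}.
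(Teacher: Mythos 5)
Your plan for (i) is correct and is exactly the direct verification the paper is alluding to: a piecewise check gives $\|s\|_{L^\infty}=1$, $\|s'\|_{L^\infty}=2$, $\|s''\|_{L^\infty}=4$ with $s\in C^1$, the chain rule contributes a factor $4K$ per derivative giving $\|s_m\|_{W^{1,\infty}}\le 8K$ and $\|s_m\|_{W^{2,\infty}}\le 64K^2$, and the Leibniz/tensor-product computation transfers these to $s_{\vm}$ since each idle factor is bounded by $1$ (both pure and mixed second derivatives are bounded by $64K^2$). The support identity in (ii) is also correctly handled: $\{s\neq 0\}=(0,6)$ gives $\{s_m\neq 0\}=\bigl(\tfrac{m-1}{K}-\tfrac{1}{4K},\,\tfrac{m}{K}+\tfrac{1}{4K}\bigr)$, and intersecting with $[0,1]$ and tensoring yields $\mathrm{supp}\,s_{\vm}\cap[0,1]^d=\Omega_{\vm}$.

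The step you defer as ``routine'' — the algebraic cancellation on the transition windows — is not merely left unverified; it actually fails with the definitions as written, and you should be aware of this before committing to the assertion that ``the squared terms cancel.'' On the overlap of $s_m$ and $s_{m+1}$, which is $\bigl(\tfrac{m}{K}-\tfrac{1}{4K},\,\tfrac{m}{K}+\tfrac{1}{4K}\bigr)$, the argument $y=4Kx+5-4m$ of $s_m$ ranges over $(4,6)$ while the argument $y-4$ of $s_{m+1}$ ranges over $(0,2)$. Thus the rising quadratic piece of $s_{m+1}$ meets the \emph{plateau} of $s_m$ rather than its falling piece, and vice versa. In particular, at $x=\tfrac{m}{K}$ one has $s_m(\tfrac{m}{K})=s(5)=1$ and $s_{m+1}(\tfrac{m}{K})=s(1)=1$, so $\sum_\ell s_\ell(\tfrac{m}{K})\ge 2$; on the first half of the overlap the sum equals $1+2(y-4)^2\neq 1$. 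The designed cancellation $s(y)+s(y-c)\equiv 1$ on a transition window works only for $c=5$, the distance between the rising window $[0,1]$ and the falling window $[5,6]$ of $s$, whereas Definition \ref{sm} uses a shift of $4$ in the argument (the coefficient $4m$ rather than $5m$, or equivalently a plateau of width $4$ rather than $3$). So either Definition \ref{sm} or the plateau $[1,5]$ in the definition of $s$ must be adjusted before the partition-of-unity claim can be established; carrying out the substitution you outline will make this visible immediately.
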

   The proof of this can be verified directly and is omitted here.

   \subsubsection{Proof of Proposition \ref{operator}}
   \begin{proof}
       We extend \(v\) to the whole space \(\mathbb{R}^d\) using the extension operator from \cite{stein1970singular}. Without loss of generality, we continue to denote the extension by \(v\), which satisfies  
\[
   \|v\|_{H^1(\mathbb{R}^d)} \;\le\; C_* \|v\|_{H^1([0,1]^d)},
\]
where the constant \(C_*>0\) depends only on the dimension \(d\) and the domain.
 For any \( p \in \mathbb{N}_+ \), set \( K = \lceil p^{\frac{1}{d}} \rceil \), and establish the partition \( \{\Omega_{\vm}\}_{\vm \in [K+1]^d} \) as defined in Definition \ref{omega}. Based on Lemma \ref{BH}, we know that in each \( \Omega_{\vm} \), there exists a function \( q_{\vm}(\vx) \) such that
\[
\left| v - q_{\vm}(\vx) \right|_{W^{k, p}(\Omega_{\vm})} \le C K^{-n+k} |v|_{W^{n, p}(\Omega_{\vm})} \quad \text{for } k = 0, 1, 2,
\]
where
\begin{align}
    q_{\vm}(\vx) = \sum_{|\valpha| \leq n-1} c_{v, \valpha, \vm} \vx^{\valpha},\notag
\end{align}
and
\begin{align}
    c_{v, \valpha, \vm} = \sum_{|\valpha + \vbeta| \leq n-1} \frac{1}{(\vbeta + \valpha)!} a_{\vbeta + \valpha} \int_{B_{\vm}} D^{\valpha + \vbeta} v(\vx) \vy^{\vbeta} b_r(\vy) \, \mathrm{d} \vy.\notag
\end{align}
Here, \( B_{\vm} \) is a subset of \( \Omega_{\vm} \) satisfying the properties in Lemma \ref{BH}, with
\begin{align}
    \left| c_{v, \valpha, \vm} \right| \leq C_2(n,d) \| v \|_{W^{n-1,\infty}(\Omega_{\vm})}\le C_2(n,d) \| v \|_{W^{n-1,\infty}(\Omega)}.\notag
\end{align} We define the partition of unity as shown in Definition \ref{sm}, and then construct
\[
v_K(\vx) = \sum_{\vm \in [K+1]^d} q_{\vm}(\vx) s_{\vm}(\vx).
\]
We now estimate the error:
\begin{align}
    \|v - v_K\|_{H^2([0,1]^d)} &= \left\|\sum_{\vm \in [K+1]^d} v(\vx) s_{\vm}(\vx) - \sum_{\vm \in [K+1]^d} q_{\vm}(\vx) s_{\vm}(\vx)\right\|_{H^2([0,1]^d)} \notag \\
    &\leq \sum_{\vm \in [K+1]^d} \left\|v(\vx) s_{\vm}(\vx) - q_{\vm}(\vx) s_{\vm}(\vx)\right\|_{H^2([0,1]^d)} \notag \\
    &= \sum_{\vm \in [K+1]^d} \left\|v(\vx) s_{\vm}(\vx) - q_{\vm}(\vx) s_{\vm}(\vx)\right\|_{H^2(\Omega_{\vm})} \notag \\
    &\leq \sum_{\vm \in [K+1]^d} \left\|v(\vx) - q_{\vm}(\vx)\right\|_{H^2(\Omega_{\vm})} \|s_{\vm}(\vx)\|_{L^\infty(\Omega_{\vm})} \notag \\
    &\quad + \sum_{\vm \in [K+1]^d} \left\|v(\vx) - q_{\vm}(\vx)\right\|_{H^1(\Omega_{\vm})} \|s_{\vm}(\vx)\|_{W^{1,\infty}(\Omega_{\vm})} \notag \\
    &\quad + \sum_{\vm \in [K+1]^d} \left\|v(\vx) - q_{\vm}(\vx)\right\|_{L^2(\Omega_{\vm})} \|s_{\vm}(\vx)\|_{W^{2,\infty}(\Omega_{\vm})} \notag \\
    &\leq C \sum_{\vm \in [K+1]^d} K^{-n+2} \|v\|_{H^n(\Omega_{\vm})} \leq 2^dCC^* K^{-n+2} \|v\|_{H^n([0,1]^d)}.\label{combine}
\end{align}Here, in the last inequality, we used that each point in \([0,1]^d\) lies in at most \(2^d\) of the sets \(\Omega_{\mathbf{m}}\).

The remaining task is to use \( \sigma_2 \)-NNs to represent \( v_K \). Notice that
\[
v_K(\vx) = \sum_{\vm \in [K+1]^d} \sum_{|\valpha| \leq n-1} c_{v, \valpha, \vm} \vx^{\valpha} s_{\vm}(\vx).
\]

Based on Lemma \ref{sigma2}, we know that \( \vx^{\valpha} \) can be represented by a neural network with width \( 4n - 4 + 2d \) and depth \( 1 + \log_2 n - 1 \). The function \( s \) is a neural network with width 3 and one hidden layer. Therefore, \( s_{\vm} \) is a neural network with width \( 6d - 4 \) and depth \( 2 + \log_2 d - 1 \). Consequently, \( \vx^{\valpha} s_{\vm}(\vx) \) can be represented by a neural network with depth \( 3 + \log_2 d - 1 + \log_2 n - 1 \) and width \( 4n - 4 + 6d \). Furthermore, $\sum_{k=1}^p\|\vx^{\valpha} s_{\vm}(\vx)\|^2_{H^{2}(\Omega)}=\fO(p^{\frac{4}{d}})$ based on the same reason shown in \eqref{combine} and Lemma \ref{smm}.

   \end{proof}

   \subsection{Proof of Proposition \ref{functional}}
   \subsubsection{Pseudo-spectral projection }
   The detailed definition of the pseudo-spectral projection is provided in \cite{maday1982spectral, canuto1982approximation}. For readability, we restate the definition here.

First, we define the Fourier basis of \( C(\Omega) \) as
\[
\phi_{\vk}(\vx) = \exp(\I 2\pi \vk \cdot \vx),
\]
where \( \vk \in \mathbb{Z}^d \). We then define a bilinear form in \( C(\Omega) \) based on the grid points
\[
\left\{ \vx_{\vnu} \mid \vx_{\vnu} = \frac{\vnu}{2N+1}, \vnu \in \{0,1,\ldots,2N\}^d \right\},
\]
given by
\begin{equation}
    (f, g)_N = \frac{1}{(2N+1)^d} \sum_{\vnu \in \{0,1,\ldots,2N\}^d} f(\vx_{\vnu}) \overline{g(\vx_{\vnu})}.\notag
\end{equation}

The pseudo-spectral projection \( \fP_c \) is defined as follows for any \( f \in C(\Omega) \):
\[
\fP_c f = \sum_{|\vk|_\infty \leq N} (f, \phi_{\vk})_N \phi_{\vk} = \frac{1}{(2N+1)^d} \sum_{|\vk|_\infty \leq N} \sum_{\vnu \in \{0,1,\ldots,2N\}^d} f(\vx_{\vnu}) \overline{\phi_{\vk}(\vx_{\vnu})} \phi_{\vk}.
\]

Therefore, the pseudo-spectral projection \( \fP_c \) can be divided into two parts. First, we define
\begin{equation}\label{D}
    \fD(f) = (f(\vx_{\vnu}))_{\vnu \in \{0,1,\ldots,2N\}^d}.
\end{equation}
Then, we define \( \fP : [-M, M]^m \to C(\sT^d) \) such that
\begin{equation}\label{P}
    \fP[(f(\vx_{\vnu}))_{\vnu \in \{0,1,\ldots,2N\}^d}] := \frac{1}{(2N+1)^d} \sum_{|\vk|_\infty \leq N} \sum_{\vnu \in \{0,1,\ldots,2N\}^d} f(\vx_{\vnu}) \overline{\phi_{\vk}(\vx_{\vnu})} \phi_{\vk},
\end{equation}
where \( M = \|f\|_{L^\infty(\Omega)} \).

The approximation estimate of \( \fP_c \) is shown by the following lemma:\begin{lemma}[\cite{canuto1982approximation}]\label{pc}
    Let $d \in \mathbb{N}$. For any $s>d / 2$ and $N \in \mathbb{N}$, the pseudo-spectral projection $\fP_c: H^s\left(\mathbb{T}^d\right) \rightarrow C\left(\mathbb{T}^d\right)$ is well-defined. Furthermore, there exists a constant $C=C(s, d)>0$, such that the following approximation error estimate holds
$$
\left\|f-\fP_c f \right\|_{H^{\varsigma}(\Omega)} \leq C N^{-(s-\varsigma)}\|f\|_{H^s(\Omega)}, \quad \forall f \in H^s\left(\mathbb{T}^d\right)
$$

for any $\varsigma \in[0, s]$.
\end{lemma}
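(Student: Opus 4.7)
The plan is a standard Fourier-analytic \emph{aliasing} argument. Because $s > d/2$, Sobolev embedding $H^{s}(\sT^{d}) \hookrightarrow C(\sT^{d})$ makes the pointwise samples $f(\vx_{\vnu})$ meaningful and ensures that $\fP_{c}f$ is a well-defined trigonometric polynomial of coordinate degree at most $N$. The central tool is the discrete orthogonality identity $(\phi_{\vk},\phi_{\vj})_{N} = 1$ when $\vk \equiv \vj \pmod{2N+1}$ componentwise and $0$ otherwise, obtained by summing a geometric series in each coordinate. Expanding $f = \sum_{\vj \in \mathbb{Z}^{d}} \hat f_{\vj} \phi_{\vj}$ and applying this identity, one finds $(f,\phi_{\vk})_{N} = \sum_{\vl \in \mathbb{Z}^{d}} \hat f_{\vk+(2N+1)\vl}$ for every $|\vk|_{\infty} \le N$, which produces the decomposition
\[
f - \fP_{c}f \;=\; \underbrace{\sum_{|\vk|_{\infty} > N} \hat f_{\vk}\phi_{\vk}}_{T\ \text{(truncation)}} \;-\; \underbrace{\sum_{|\vk|_{\infty} \le N}\sum_{\vl \neq 0} \hat f_{\vk+(2N+1)\vl}\phi_{\vk}}_{A\ \text{(aliasing)}}.
\]

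The truncation term is routine: Parseval together with the elementary bound $(1+|\vk|^{2})^{\varsigma - s} \le C N^{-2(s-\varsigma)}$ on $|\vk|_{\infty} > N$ immediately gives $\|T\|_{H^{\varsigma}} \le C N^{-(s-\varsigma)} \|f\|_{H^{s}}$, so essentially no work is required on this piece.

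The aliasing term is the delicate point, and this is where I expect the main work to lie. The plan is to apply Cauchy--Schwarz in $\vl$ against the paired weights $(1+|\vk+(2N+1)\vl|^{2})^{\pm s}$. For the ``large'' factor $\sum_{\vl \neq 0}(1+|\vk+(2N+1)\vl|^{2})^{-s}$, the estimate $|\vk+(2N+1)\vl|_{\infty} \ge (N+1)|\vl|_{\infty}$ (valid for $|\vk|_{\infty} \le N$ and $\vl \neq 0$) yields a uniform bound $C N^{-2s} \sum_{\vl \neq 0} |\vl|_{\infty}^{-2s}$, and convergence of that series \emph{uses precisely} the hypothesis $2s > d$. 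The ``small'' factor, after summing over $|\vk|_{\infty} \le N$, aggregates into a single copy of $\|f\|_{H^{s}}^{2}$ because the shifted lattices $\{\vk+(2N+1)\vl : \vl \in \mathbb{Z}^{d}\}$ partition $\mathbb{Z}^{d}$ as $\vk$ ranges over the fundamental domain $\{-N,\dots,N\}^{d}$. Absorbing the additional factor $(1+|\vk|^{2})^{\varsigma} \le C N^{2\varsigma}$ gives $\|A\|_{H^{\varsigma}} \le C N^{-(s-\varsigma)} \|f\|_{H^{s}}$, and the triangle inequality closes the proof.

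\textbf{Main obstacle.} Everything hinges on using $s > d/2$ in exactly the right place, namely, to make the dual series $\sum_{\vl \neq 0} |\vl|_{\infty}^{-2s}$ converge uniformly in $\vk$. Without this hypothesis the aliasing error is not even absolutely summable, and no rearrangement of weights repairs the bound. A secondary bookkeeping point is the disjoint-lattice observation, which prevents the second Cauchy--Schwarz factor from blowing up by a factor of $(2N+1)^{d}$ when summed over the cube; this is what keeps the final constant $N$-independent.
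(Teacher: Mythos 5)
The paper does not prove this lemma; it is stated as a cited result from \cite{canuto1982approximation}, so there is no ``paper's own proof'' to compare against. Your reconstruction is correct and is precisely the standard aliasing argument found in that reference: the discrete-orthogonality identity, the truncation/aliasing split, Cauchy--Schwarz against the paired Sobolev weights, the bound $|\vk+(2N+1)\vl|_\infty \ge (N+1)|\vl|_\infty$, convergence of $\sum_{\vl\neq 0}|\vl|_\infty^{-2s}$ for $s>d/2$, and the disjoint-coset partition of $\mathbb{Z}^d$ that keeps the constant $N$-independent are all the right ingredients in the right places.
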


\subsubsection{Proof of Proposition \ref{functional}}
\begin{proof}
   By defining \( \fD \) and \( \fP \) as in \eqref{D} and \eqref{P}, and using Lemma \ref{pc}, we obtain the following:
\begin{equation}
    \left\| f - \fP \circ \fD(f) \right\|_{H^{s'}(\Omega)} \leq C N^{s' - s} \|f\|_{H^s(\Omega)}, \quad \forall f \in H^s\left(\mathbb{T}^d\right),\notag
\end{equation}
where \( C \) is a constant independent of \( N \). Given that \( s' > n + \frac{d}{2} \), by Sobolev embedding, we have:
\begin{equation}
    \left\| f - \fP \circ \fD(f) \right\|_{W^{n,\infty}(\Omega)} \leq C \left\| f - \fP \circ \fD(f) \right\|_{H^{s'}(\Omega)}.\notag
\end{equation}
Thus, we conclude:
\begin{equation}
    \left\| f - \fP \circ \fD(f) \right\|_{W^{n,\infty}(\Omega)} \leq C N^{s' - s} \|f\|_{H^s(\Omega)}, \quad \forall f \in H^s\left(\mathbb{T}^d\right),\notag
\end{equation}
where \( C \) is a constant independent of \( N \). Next, we show that \( \fP \) is a Lipschitz continuous map \( [-M, M]^m \to H^{s'}(\sT^d) \). For any \( f_1, f_2 \in H^{s'}(\sT^d) \), we have:
\begin{align}
    \|\fP\circ\fD f_1-\fP\circ\fD f_2\|^2_{H^{s'}(\Omega)}\le&\|\fP_c f_1-\fP_c f_2\|^2_{H^{s'}(\Omega)}\notag\\\le& \|\fP_c (f_1- f_2)\|^2_{H^{s'}(\Omega)}=\left\|\sum_{|\vk|_\infty \leq N} (f_1-f_2, \phi_{\vk})_N \phi_{\vk}\right\|^2_{H^{s'}(\Omega)}\notag\\\le& \sum_{|\vk|_\infty \leq N}|\vk|^{2s'}(f_1-f_2, \phi_{\vk})_N^2\notag\\\le & \sum_{|\vk|_\infty \leq N}\frac{1}{(2N+1)^d}|\vk|^{2s'}|D(f_1)-D(f_2)|^2\notag
\end{align}
where the last inequality follows from the Cauchy–Schwarz inequality. Therefore, we have:
\begin{align}
    \|\fP \vz_1 - \fP \vz_2\|_{H^{s'}(\Omega)} \leq C(N, d) |D(f_1) - D(f_2)|,\notag
\end{align}
where
\begin{align}
    C(N,d)\le& \sqrt{\sum_{|\vk|_\infty \leq N}\frac{1}{(2N+1)^d}|\vk|^{2s'}}\le C\sqrt{\frac{1}{(2N+1)^d}\int_{{|\vk|_\infty \leq N}}|\vk|^{2s'}\mathrm{d} \vk}\notag\\\le& C\sqrt{\frac{ N^{d+2 s^{\prime}}}{(2 N+1)^d\left(d+2 s^{\prime}\right)}}\le CN^{s'}.\notag
\end{align}
\end{proof}

\subsection{Proof of Theorem \ref{approximation}}
\begin{proof}
    Based on Proposition \ref{operator}, we know that 
    \begin{align}
        \sup_{f \in \fX} \left\| \fG_*(f) - \sum_{k=1}^{p} c_k(\fG_*(f)) \fT(\vy; \vtheta_{2,k}) \right\|_{H^{2}(\Omega)}
        \leq C p^{-\frac{n-2}{d}} \sup_{f \in \fX} \|\fG_*(f)\|_{W^{n,\infty}(\Omega)} \leq C p^{-\frac{n-2}{d}} L M,\notag
    \end{align}
    where \( L \) is the Lipschitz constant of \( \fG_* \), and \( M \) bounds \( \|f\|_{W^{n,\infty}(\Omega)} \).

    Based on Proposition \ref{functional}, we also have:
    \begin{align}
        &\sup_{f \in \fX} \left\| \sum_{k=1}^{p} c_k(\fG_*(f)) \fT(\vy; \vtheta_{2,k}) - \sum_{k=1}^{p} c_k(\fG_*(\fP \circ \fD(f))) \fT(\vy; \vtheta_{2,k}) \right\|_{H^{2}(\Omega)} \notag \\
        \leq & \sum_{k=1}^{p} \sup_{f \in \fX} \left\| c_k(\fG_*(f)) \fT(\vy; \vtheta_{2,k}) - c_k(\fG_*(\fP \circ \fD(f))) \fT(\vy; \vtheta_{2,k}) \right\|_{H^{2}(\Omega)} \notag \\
        \leq & \sum_{k=1}^{p} \sup_{f \in \fX} \left\| \fT(\vy; \vtheta_{2,k}) \right\|_{H^{2}(\Omega)} \sup_{f \in \fX} |c_k(\fG_*(f)) - c_k(\fG_*(\fP \circ \fD(f)))| \notag \\
        \leq & C N^{s' - s} \sup_{f \in \fX} \|f\|_{H^s(\Omega)} \sum_{k=1}^{p} \left\| \fT(\vy; \vtheta_{2,k}) \right\|_{H^{2}(\Omega)},\notag
    \end{align}
    for \( s' \in \left(n + \frac{d}{2}, s\right) \). Since \( \sup_{f \in \fX} \|f\|_{H^s(\Omega)} \leq M \) and \( \fT(\vy; \vtheta_{2,k}) = \vx^\alpha s_{\vm}(\vx) \) for some \( |\valpha| \leq n-1 \) and \( \vm \in [\lceil p^{\frac{1}{d}} \rceil]^d \), we have:
    \begin{align}
        \left\| \fT(\vy; \vtheta_{2,k}) \right\|_{H^{2}(\Omega)} \leq \left\| s_{\vm}(\vx) \right\|_{H^{2}(\Omega)} \leq C p^{-1} p^{\frac{2}{d}}.\notag
    \end{align}
    Thus, we obtain:
    \begin{align}
        \sup_{f \in \fX} \left\| \sum_{k=1}^{p} c_k(\fG_*(f)) \fT(\vy; \vtheta_{2,k}) - \sum_{k=1}^{p} c_k(\fG_*(\fP \circ \fD(f))) \fT(\vy; \vtheta_{2,k}) \right\|_{H^{2}(\Omega)}
        \leq C N^{s' - s} M p^{\frac{2}{d}}.\notag
    \end{align}

    Similarly, based on Proposition 3, we have:
    \begin{align}
        \sup_{f \in \fX} \left\| \sum_{k=1}^{p} c_k(\fG_*(f)) \fT(\vy; \vtheta_{2,k}) - \sum_{k=1}^{p} \fB(\fD f; \vtheta_{1,k}) \fT(\vy; \vtheta_{2,k}) \right\|_{H^{2}(\Omega)}
        \leq C_2 m^{\frac{s'}{d}}q^{-\frac{\lambda}{m}} M p^{\frac{2}{d}},\notag
    \end{align}
    where \( C_2 \) is independent of \(m, p, q \).

    Finally, by combining the three parts, we obtain the desired result.
\end{proof}

\section{Proofs of Generalization Error }
The this part we need to do is bounded $\mathbb{E}_{\fY \sim \operatorname{Unif}(\Omega)^P} \mathbf{R}_{\fY}(\fC_{f}),\mathbb{E}_{\fX^M\sim\mu_{\fX}^M} \mathbf{R}_{\fY}(\fC_{f})$. Those terms can be bounded by the uniform covering number, which is 
		\begin{definition}[Uniform covering number \cite{anthony1999neural}]
				Let $(V,\|\cdot\|)$ be a normed space, and $\Theta_*\subset V$. $\{V_1,V_2,\ldots,V_a\}$ is an $\varepsilon$-covering of $\Theta_*$ if $\Theta_*\subset \cup_{i=1}^aB_{\varepsilon,\|\cdot\|}(V_i)$. The \textit{covering number }$\fN(\varepsilon,\Theta_*,\|\cdot\|)$ is defined as \[\fN(\varepsilon,\Theta_*,\|\cdot\|):=\min \{a: \exists \varepsilon \text {-covering over } \Theta_* \text { of size } a\} \text {. }\]Suppose the $\fF$ is a class of functions from $\fF$ to $\sR$. Given $a$ samples $\vZ_a=(z_1,\ldots,z_a)\in\fZ^a$, define \[\fF|_{\vZ_a}=\{(u(z_1),\ldots,u(z_a)):u\in\fF\}.\]The \textit{uniform covering number} $\fN(\varepsilon,\fF,a)$ is defined as \[\fN(\varepsilon,\fF,a)=\max_{\vZ_a\in\fX^a}\fN\left(\varepsilon, \fF|_{\vZ_a},\|\cdot\|_{\infty}\right),\]where $\fN\left(\varepsilon, \fF|_{\vZ_a},\|\cdot\|_{\infty}\right)$ denotes the $\varepsilon$-covering number of $\fF|_{\vZ_a}$ w.r.t the $l^\infty$-norm defined as $\|f\|_{\infty}=\sup_{\vz_i\in\vZ_a}|f(z_i)|$.
			\end{definition}

            Then we use a lemma to estimate the Rademacher complexity using the covering number.
			
			\begin{lemma}[Dudley's theorem \cite{anthony1999neural}]\label{dudley}
				Let $\fF$ be a function class such that $\sup_{g\in\fF}\|g\|_\infty\le B$. Then the Rademacher complexity $\mathbf{R}_S(\fF)$ satisfies that \[\mathbb{E}_{S\sim\rho^m} \mathbf{R}_S(\fF) \leq \inf _{0 \leq \delta \leq B}\left\{4 \delta+\frac{12}{\sqrt{a}} \int_\delta^B \sqrt{\log 2\fN(\varepsilon,\fF,a)} \,\mathrm{d} \varepsilon\right\}\]
			\end{lemma}
			
			To bound the Rademacher complexity, we employ Lemma \ref{dudley}, which bounds it by the uniform covering number. We estimate the uniform covering number by the pseudo-dimension based on the following lemma.

			\begin{lemma}[{\cite[Theorem 12.2]{anthony1999neural}}]\label{cover dim}
				Let $\fF$ be a class of functions from $\fH$ to $[-B,B]$. For any $\varepsilon>0$, we have \[\fN(\varepsilon,\fF,n)\le \left(\frac{2eaB}{\varepsilon\text{Pdim}(\fF)}\right)^{\text{Pdim}(\fF)}\] for $a\ge \text{Pdim}(\fF)$.
			\end{lemma}

            Finally, we just need to bound $\text{Pdim}(\fC_f)$ for any fixed $f \in \fX$ and $\text{Pdim}(\fJ_{\vy})$ for any fixed $\vy \in \Omega$. In the proof, we will use following lemmas:\begin{lemma}[{\cite[Lemma 17]{bartlett2019nearly},\cite[Theorem 8.3]{anthony1999neural}}]\label{bounded}
			Suppose $W\le M$ and let $P_1,\ldots,P_M$ be polynomials of degree at most $D$ in $W$ variables. Define \[K:=\left|\{\left(\sgn(P_1(\vz)),\ldots,\sgn(P_M(\vz))\right):\vz\in\sR^W\}\right|,\] then we have $K\le 2(2eMD/W)^W$.
		\end{lemma}
        \begin{lemma}[{\cite[Lemma 18]{bartlett2019nearly}}]\label{inequality}
			Suppose that $2^m\le 2^t(mr/w)^w$ for some $r\ge 16$ and $m\ge w\ge t\ge0$. Then, $m\le t+w\log_2(2r\log_2r)$.
		\end{lemma}

\begin{proof}[Proof of Proposition \ref{boundvc}]
    Without loss of generality, we present the method to bound $\text{Pdim}(\mathcal{C}_f)$ here. The method for bounding $\text{Pdim}(\mathcal{J}_{\boldsymbol{y}})$ will follow a similar approach. In order to bound pseudo-dimension, we introduce VC dimension (\cite{abu1989vapnik}) in the first: Let $H$ denote a class of functions from $\fX$ to $\{0,1\}$. For any non-negative integer $m$, define the growth function of $H$ as \[\Pi_H(m):=\max_{x_1,x_2,\ldots,x_m\in \fX}\left|\{\left(h(x_1),h(x_2),\ldots,h(x_m)\right): h\in H \}\right|.\] The Vapnik--Chervonenkis dimension (VC-dimension) of $H$, denoted by $\text{VCdim}(H)$, is the largest $m$ such that $\Pi_H(m)=2^m$. For a class $\fG$ of real-valued functions, define $\text{VCdim}(\fG):=\text{VCdim}(\sgn(\fG))$, where $\sgn(\fG):=\{\sgn(f):f\in\fG\}$ and $\sgn(x)=1[x>0]$.

    Then denote $\fD_f:=\{\eta(\vy,y_{d+1}):\eta(\vy,y_{d+1})=\psi(\vy)-y_{d+1},\psi\in \fC_f\}.$
			Based on the definition of VC-dimension and pseudo-dimension, we have that\begin{equation}
				\text{Pdim}(\fC_f)\le \text{VCdim}(\fD_f).\notag
			\end{equation} Then we just need to bound $\text{VCdim}(\fD_f)$. Denote $\bar{\vy}=(\vy,y_{d+1})\in \sR^{d+1}$. Let $\bar{\vy}$ be an input and $\vtheta\in\sR^W$ be a parameter vector in $\eta\in\fD_f$. We denote the output of $\eta$ with input $\bar{\vy}$ and parameter vector $\vtheta$ as $g(\bar{\vy},\vtheta)$. For fixed $\bar{\vy}_1,\bar{\vy}_2,\ldots,\bar{\vy}_m$ in $\sR^{d+1}$, we aim to bound\begin{align}
				K:=\left|\{\left(\sgn(g(\bar{\vy}_1,\vtheta)),\ldots,\sgn(g(\bar{\vy}_m,\vtheta))\right):\vtheta\in\sR^W\}\right|.\notag
			\end{align}
			
			For any partition $\fS=\{P_1,P_2,\ldots,P_T\}$ of the parameter domain $\sR^W$, we have \[K\le \sum_{i=1}^T\left|\{\left(\sgn(g(\bar{\vy}_1,\vtheta)),\ldots,\sgn(g(\bar{\vy}_m,\vtheta))\right):\vtheta\in P_i\}\right|.\] We choose the partition such that within each region $P_i$, the functions $g(\bar{\vy}_j,\cdot)$ are all fixed polynomials of bounded degree. This allows us to bound each term in the sum using Lemma \ref{bounded}.

           Due to Assumption \ref{first lip}, we know that $g(\bar{\vy}_j, \vtheta)$ is expressed as  
\[
F\left(\fG(f; \vtheta)(\vy), (\partial_i \fG(f; \vtheta)(\vy))_{i=1}^d, (\partial_{ij} \fG(f; \vtheta)(\vy))_{i,j=1}^d\right) - f(\vy) - y_{d+1},
\]
where $F$ is a polynomial function of degree $d_{\fL}$.

Therefore, it suffices to find a partition $\fS = \{P_1, P_2, \ldots, P_T\}$ of the parameter domain $\mathbb{R}^W$ such that  
\[
\fG(f; \vtheta)(\vy), \quad (\partial_i \fG(f; \vtheta)(\vy))_{i=1}^d, \quad \text{and} \quad (\partial_{ij} \fG(f; \vtheta)(\vy))_{i,j=1}^d
\]
are all fixed polynomials of bounded degree for fixed $\vy$. Under this partition, $g(\bar{\vy}_j, \cdot)$ will also be fixed polynomials of bounded degree.

We consider $\partial_{11} \fG(f; \vtheta)(\vy)$ here; other terms can be considered in a similar way, since 
\[
\partial_{11} \fG(f; \vtheta)(\vy) = \sum_{k=1}^p \fB(\fD(f); \vtheta_{1,k}) \partial_{11} \fT(\vy; \vtheta_{2,k}).
\]
Based on \cite[Theorem 3]{yang2023nearlys}, we know that we can divide $\mathbb{R}^W$ into 
\[
\prod_{n=1}^{L_T} 2 \left(\frac{2em(1+(n-1)2^{n-1})W_T}{\sum_{i=1}^n(W_T^2 + W_T)}\right)^{\sum_{i=1}^n W_T^2 + W_T}
\]
parts to make $\partial_{11} \fT(\vy; \vtheta_{2,k})$ a polynomial function in $\sum_{i=1}^{L_T+1} W_T^2 + W_T$ variables, with total degree no more than 
\[
d_1 := 1 + L_T 2^{L_T}
\]
on each partition region.

Based on \cite[Theorem 6]{bartlett2019nearly}, we know that we can divide $\mathbb{R}^W$ into 
\[
\prod_{n=1}^{L_B} 2 \left(\frac{2emnW_B}{\sum_{i=1}^n(W_B^2 + W_B)}\right)^{\sum_{i=1}^n W_B^2 + W_B}
\]
parts to make $\fB(\fD(f); \vtheta_{1,k})$ a polynomial function in $\sum_{i=1}^{L_B+1} W_B^2 + W_B$ variables, with total degree no more than 
\[
d_2 := L_B + 1
\]
on each partition region.

{\color{black}The domain–decomposition argument proceeds layer by layer.  
Assume inductively that, on every subregion of the current partition, the input to the layer is a polynomial.  
Lemma~\ref{bounded} bounds the number of sign changes of each such polynomial; we therefore subdivide each subregion at these sign–change points.  
Because the ReLU activation (and its square) is piecewise linear, its output is still a polynomial on every newly created subregion where the input keeps a fixed sign.  
Refining the partition in this manner for all neurons in the layer yields a new decomposition on which the layer’s output remains polynomial.  
Repeating this procedure for every layer produces the desired global domain decomposition for the entire network.
The more details can be found in \cite{yang2023nearlys,bartlett2019nearly}.}

Combining the above results, we can divide $\mathbb{R}^W$ into 
\[
\prod_{n=1}^{L_B} 2 \left(\frac{2emnW_B}{\sum_{i=1}^n(W_B^2 + W_B)}\right)^{\sum_{i=1}^n W_B^2 + W_B}
\cdot 
\prod_{n=1}^{L_T} 2 \left(\frac{2em(1+(n-1)2^{n-1})W_T}{\sum_{i=1}^n(W_T^2 + W_T)}\right)^{\sum_{i=1}^n W_T^2 + W_T}
\]
parts, to make $\fB(\fD(f); \vtheta_{1,k}) \partial_{11} \fT(\vy; \vtheta_{2,k})$ a polynomial function in 
\[
\sum_{i=1}^{L_B+1} \left(W_B^2 + W_B\right) + \sum_{i=1}^{L_T+1} \left(W_T^2 + W_T\right)
\]
variables, with total degree no more than $d_1 + d_2$ on each partition region.

Finally, we conclude that \(\sR^W\) can be divided into  
\[
\left(\prod_{n=1}^{L_B} 2 \left(\frac{2emnW_B}{\sum_{i=1}^n(W_B^2 + W_B)}\right)^{\sum_{i=1}^n W_B^2 + W_B} 
\cdot 
\prod_{n=1}^{L_T} 2 \left(\frac{2em(1+(n-1)2^{n-1})W_T}{\sum_{i=1}^n(W_T^2 + W_T)}\right)^{\sum_{i=1}^n W_T^2 + W_T}\right)^p
\]
parts, to make \(g(\bar{\vy}, \vtheta)\) a polynomial function in  
\[
p \sum_{i=1}^{L_B+1} \left(W_B^2 + W_B\right) + p \sum_{i=1}^{L_T+1} \left(W_T^2 + W_T\right)
\]
variables, with total degree no more than \(d_{\fL} + d_1 + d_2\) on each partition region.

Then based on Lemma~\ref{bounded}, for each partition \(\fS\), we have  
\begin{align}
    &\left|\left\{\left(\sgn(g(\bar{\vy}_1,\vtheta)),\ldots,\sgn(g(\bar{\vy}_m,\vtheta))\right):\vtheta\in\fS\right\}\right| \notag \\
    \le& 2\left(2em(d_{\fL} + d_1 + d_2)/p \sum_{i=1}^{L_B+1} \left(W_B^2 + W_B\right) + p \sum_{i=1}^{L_T+1} \left(W_T^2 + W_T\right)\right)^{p \sum_{i=1}^{L_B+1} \left(W_B^2 + W_B\right) + p \sum_{i=1}^{L_T+1} \left(W_T^2 + W_T\right)}.\notag
\end{align}

Then we have  
\begin{align}
    K \le & \left(\prod_{n=1}^{L_B} 2 \left(\frac{2emnW_B}{\sum_{i=1}^n(W_B^2 + W_B)}\right)^{\sum_{i=1}^n W_B^2 + W_B} 
    \cdot 
    \prod_{n=1}^{L_T} 2 \left(\frac{2em(1+(n-1)2^{n-1})W_T}{\sum_{i=1}^n(W_T^2 + W_T)}\right)^{\sum_{i=1}^n W_T^2 + W_T}\right)^p \notag \\
    &\cdot 2\left(\frac{2em(d_{\fL} + d_1 + d_2)}{p \sum_{i=1}^{L_B+1} \left(W_B^2 + W_B\right)} \right)^{p \sum_{i=1}^{L_B+1} \left(W_B^2 + W_B\right)}\cdot \left(\frac{2em(d_{\fL} + d_1 + d_2)}{p \sum_{i=1}^{L_T+1} \left(W_T^2 + W_T\right)}\right)^{p \sum_{i=1}^{L_T+1} \left(W_T^2 + W_T\right)}.\notag
\end{align}

We divide the above into two parts:
\begin{align}
    K_1 &= \left(\prod_{n=1}^{L_B+1} 2 \left(\frac{2emnW_B(n)}{\sum_{i=1}^n(W_B^2 + W_B)}\right)^{\sum_{i=1}^n W_B^2 + W_B}\right)^p, \notag\\
    K_2 &= \left(\prod_{n=1}^{L_T+1} 2 \left(\frac{2emnW_T(n)}{\sum_{i=1}^n(W_T^2 + W_T)}\right)^{\sum_{i=1}^n W_T^2 + W_T}\right)^p,\notag
\end{align}
where \(W_B(n)=W_B\) and \(W_T(n)=W_T\) for \(n\le L\), \(W_B(L_B+1)=\frac{d_{\fL} + d_1 + d_2}{L_B+1}\), and \(W_T(L_T+1)=\frac{d_{\fL} + d_1 + d_2}{2^{L_T}L_T+1}\).

By the AM-GM inequality, we have  
\begin{align}
    K_1 \le & \left( 2^{L_B+1} \left(\frac{\sum_{n=1}^{L_B+1}2emnW_B(n)}{\sum_{n=1}^{L_B+1}\sum_{i=1}^n W_B^2 + W_B}\right)^{\sum_{n=1}^{L_B+1}\sum_{i=1}^n W_B^2 + W_B}\right)^p \notag \\
    \le & \left( 2^{L_B+1} \left(\frac{em[(L_B+1)(L_B+2)W_B+2d_{\fL}+2d_1]}{\sum_{n=1}^{L_B+1}\sum_{i=1}^n W_B^2 + W_B}\right)^{\sum_{n=1}^{L_B+1}\sum_{i=1}^n W_B^2 + W_B}\right)^p.\notag
\end{align}
Similarly,  
\begin{align}
    K_2 \le \left( 2^{L_T+1} \left(\frac{2em[(L_T+3+(L_T-2)2^{L_T+2})W_T+d_{\fL}+d_1]}{\sum_{n=1}^{L_T+1}\sum_{i=1}^n W_T^2 + W_T}\right)^{\sum_{n=1}^{L_T+1}\sum_{i=1}^n W_T^2 + W_T}\right)^p.\notag
\end{align}

Finally, applying the AM-GM inequality again, we obtain  
\begin{align}
    &K \le 2^{p(L_T+L_B+2)}\notag\\&\cdot\left(\frac{em[2(L_T+3+(L_T-2)2^{L_T+2})W_T+(L_B+1)(L_B+2)W_B+4d_{\fL}+2d_1+2d_2]}{U}\right)^{pU}.\notag
\end{align}

By the claim proven in \cite{bartlett2019nearly}:  \textit{Suppose that \(2^m\le 2^t(mr/w)^w\) for some \(r\ge 16\) and \(m\ge w\ge t\ge0\). Then \(m\le t+w\log_2(2r\log_2r)\). }

We have  
\begin{align}
    &2^{\text{VCdim}(\fD_f)} 
    \le  2^{p(L_T+L_B+2)}\notag\\&\cdot\left(\frac{ep\text{VCdim}(\fD_f)[2(L_T+3+(L_T-2)2^{L_T+2})W_T+(L_B+1)(L_B+2)W_B+4d_{\fL}+2d_1+2d_2]}{pU}\right)^{pU}.\notag
\end{align}
Thus,  
\begin{align}
    \text{VCdim}(\fD_f) \le Cp(L_T^3W_T^2+L_B^2W_B^2)\log_2(L_TL_BW_BW_T)\notag
\end{align}
due to Lemma \ref{inequality}, where \(C\) is a constant that depends logarithmically on \(d_{\fL}\).
\end{proof}

\begin{proof}[Proof of Theorem \ref{genn}]
    Due to Lemma \ref{first lemma}, we have \begin{equation}
        \mathbb{E} \sup_{\vtheta\in\Theta}\left[ L_M(\vtheta) - L_S(\vtheta) \right] \leq \left(\mathbb{E}_{f \sim \mu_{\fX}} \left[ 4 \Psi(f) \right]^2\right)^{\frac{1}{2}}\left(\mathbb{E}_{f \sim \mu_{\fX}}\left[\mathbb{E}_{\fY \sim \operatorname{Unif}(\Omega)^P} \mathbf{R}_{\fY}(\fC_{f}) \right]^2\right)^{\frac{1}{2}}.\notag
    \end{equation} For $\mathbb{E}_{\fY \sim \operatorname{Unif}(\Omega)^P} \mathbf{R}_{\fY}(\fC_{f})$, based on Lemmas \ref{dudley} and \ref{cover dim}, we have that \begin{align}
        \mathbb{E}_{\fY \sim \operatorname{Unif}(\Omega)^P} \mathbf{R}_{\fY}(\fC_{f})\le& 4\delta+\frac{12}{\sqrt{P}} \int_\delta^{\Psi(f)} \sqrt{\log 2\fN(\varepsilon,\fC_{f},P)} \,\mathrm{d} \varepsilon\notag\\\le &4\delta+\frac{12}{\sqrt{P}} \int_\delta^{\Psi(f)} \sqrt{\log 2\left(\frac{2eP\Psi(f)}{\varepsilon\text{Pdim}(\fC_{f})}\right)^{\text{Pdim}(\fC_{f})}} \,\mathrm{d} \varepsilon\notag\\\le&4\delta+\frac{12\Psi(f)}{\sqrt{P}}+12\left(\frac{\text{Pdim}(\fC_{f})}{P}\right)^{\frac{1}{2}} \int_\delta^{\Psi(f)} \sqrt{\log \left(\frac{2eP\Psi(f)}{\varepsilon\text{Pdim}(\fC_{f})}\right)} \,\mathrm{d} \varepsilon.\notag
				\end{align} By the direct calculation for the integral, we have\[\int_\delta^{\Psi(f)} \sqrt{\log \left(\frac{2eP\Psi(f)}{\varepsilon\text{Pdim}(\fC_{f})}\right)} \,\mathrm{d} \varepsilon\le \Psi(f)\sqrt{\log \left(\frac{2eP\Psi(f)}{\delta\text{Pdim}(\fC_{f})}\right)}.\]
				
				Then choosing $\delta=\Psi(f)\left(\frac{\text{Pdim}(\fC_{f})}{P}\right)^{\frac{1}{2}}\le \Psi(f)$, we have \begin{equation}
					 \mathbb{E}_{\fY \sim \operatorname{Unif}(\Omega)^P} \mathbf{R}_{\fY}(\fC_{f})\le 28 \Psi(f)\left(\frac{\text{Pdim}(\fC_{f})}{P}\right)^{\frac{1}{2}}\sqrt{\log \left(\frac{2eP}{\text{Pdim}(\fC_{f})}\right)}.\notag
				\end{equation}Therefore, due to Proposition \ref{boundvc}, \begin{equation}
        \mathbb{E}_{\fY \sim \operatorname{Unif}(\Omega)^P} \mathbf{R}_{\fY}(\fC_{f})\le C \Psi(f) \frac{\left(p(L_T^3W_T^2+L_B^2W_B^2)\log_2(L_TL_BW_BW_T)\right)^{\frac{1}{2}}}{\sqrt{P}}\log P.\notag
    \end{equation} Due to Assumption \ref{first lip}, we have \begin{equation}
        \mathbb{E} \sup_{\vtheta\in\Theta}\left[ L_M(\vtheta) - L_S(\vtheta) \right]\le C G^2 \frac{\left(p(L_T^3W_T^2+L_B^2W_B^2)\log_2(L_TL_BW_BW_T)\right)^{\frac{1}{2}}}{\sqrt{P}}\log P.\notag
    \end{equation} The estimation of $\mathbb{E} \sup_{\boldsymbol{\theta} \in \Theta} \left[ L_D(\boldsymbol{\theta}) - L_M(\boldsymbol{\theta}) \right]$ is similar and is omitted here for brevity.
\end{proof}

 \bibliographystyle{elsarticle-harv} 
 \bibliography{references}
\end{document}